\setlist[itemize]{noitemsep, nolistsep, leftmargin=*}
\setlist[enumerate]{noitemsep, nolistsep, leftmargin=*}
\newcommand{\kibitz}[2]{\ifnum\Comments=1\textcolor{#1}{#2}\fi}
\definecolor{darkgreen}{rgb}{0,0.4,0}
\definecolor{purple}{rgb}{1,0,1}
\newcommand{\cready}[1]{{}}
\newenvironment{talign}
 {\align}
 {\endalign}
\newenvironment{talign*}
 {\csname align*\endcsname}
 {\endalign}
\icmltitlerunning{tinyBenchmarks: evaluating LLMs with fewer examples}
\begin{document}

\twocolumn[
\icmltitle{tinyBenchmarks: evaluating LLMs with fewer examples}



\icmlsetsymbol{equal}{*}

\begin{icmlauthorlist}
\icmlauthor{Felipe Maia Polo}{mmm}
\icmlauthor{Lucas Weber}{ppp}
\icmlauthor{Leshem Choshen}{ibm,mit}
\icmlauthor{Yuekai Sun}{mmm}
\icmlauthor{Gongjun Xu}{mmm}
\icmlauthor{Mikhail Yurochkin}{ibm,mitibm}
\end{icmlauthorlist}

\icmlaffiliation{mmm}{Department of Statistics, University of Michigan, USA}
\icmlaffiliation{ppp}{Department of Translation and Language Sciences, University of Pompeu Fabra, Spain}
\icmlaffiliation{mit}{MIT}
\icmlaffiliation{ibm}{IBM Research}
\icmlaffiliation{mitibm}{MIT-IBM Watson AI Lab}

\icmlcorrespondingauthor{Felipe Maia Polo}{felipemaiapolo@gmail.com}
\icmlkeywords{IRT, Efficient, Benchmarking, LLM, Machine Learning}

\vskip 0.3in
]



\printAffiliationsAndNotice{} 

\begin{abstract}
The versatility of large language models (LLMs) led to the creation of diverse benchmarks that thoroughly test a variety of language models' abilities. These benchmarks consist of tens of thousands of examples making evaluation of LLMs very expensive. In this paper, we investigate strategies to reduce the number of evaluations needed to assess the performance of an LLM on several key benchmarks. For example, we show that to accurately estimate the performance of an LLM on MMLU, a popular multiple-choice QA benchmark consisting of 14K examples, it is sufficient to evaluate this LLM on 100 curated examples. We release evaluation tools and tiny versions of popular benchmarks: Open LLM Leaderboard, MMLU, HELM, and AlpacaEval 2.0.
Our empirical analysis demonstrates that these tools and tiny benchmarks are sufficient to reliably and efficiently reproduce the original evaluation results\footnote{To use our methods for efficient LLM evaluation, please check \url{https://github.com/felipemaiapolo/tinyBenchmarks}. This repository includes a Python package for model evaluation and tutorials. Additionally, we have uploaded tiny datasets on \href{https://huggingface.co/tinyBenchmarks}{\texttt{huggingface.co/tinyBenchmarks}} and developed a \href{https://github.com/felipemaiapolo/tinyBenchmarks/blob/main/tinyBenchmarks_MMLU_demo.ipynb}{Google Colab demo} in which you can easily use our tools to estimate LLM performances on MMLU. To reproduce the results in this paper, please check this \href{https://github.com/felipemaiapolo/efficbench}{GitHub repository}.}.
\end{abstract}

\section{Introduction}

Large Language Models (LLMs) have demonstrated remarkable abilities to solve a diverse range of tasks \citep{brown2020language}. Quantifying these abilities and comparing different LLMs became a challenge that led to the development of several key benchmarks, e.g., MMLU \citep{hendrycks2020measuring}, Open LLM Leaderboard \citep{open-llm-leaderboard}, HELM \citep{liang2022holistic}, and AlpacaEval \citep{alpaca_eval}. 

These benchmarks are comprised of hundreds or thousands of examples, making the evaluation of modern LLMs with billions of parameters computationally, environmentally, and financially very costly. For example, \citet{liang2022holistic} report that evaluating the performance of a single LLM on HELM costs over 4K GPU hours (or over \$10K for APIs). Benchmarks like AlpacaEval \citep{alpaca_eval} also require a commercial LLM as a judge to perform evaluation, further increasing the costs. Furthermore, evaluation of a single model is often performed many times to monitor checkpoints during pre-training \citep{biderman2023emergent,liu2023llm360} and to explore different prompting strategies or a wider range of hyperparameters \citep{weber2023mind,mizrahi2023state,sclar2023quantifying,voronov2024mind}.





\begin{figure}[h]
\centering
\includegraphics[width=.8\linewidth]{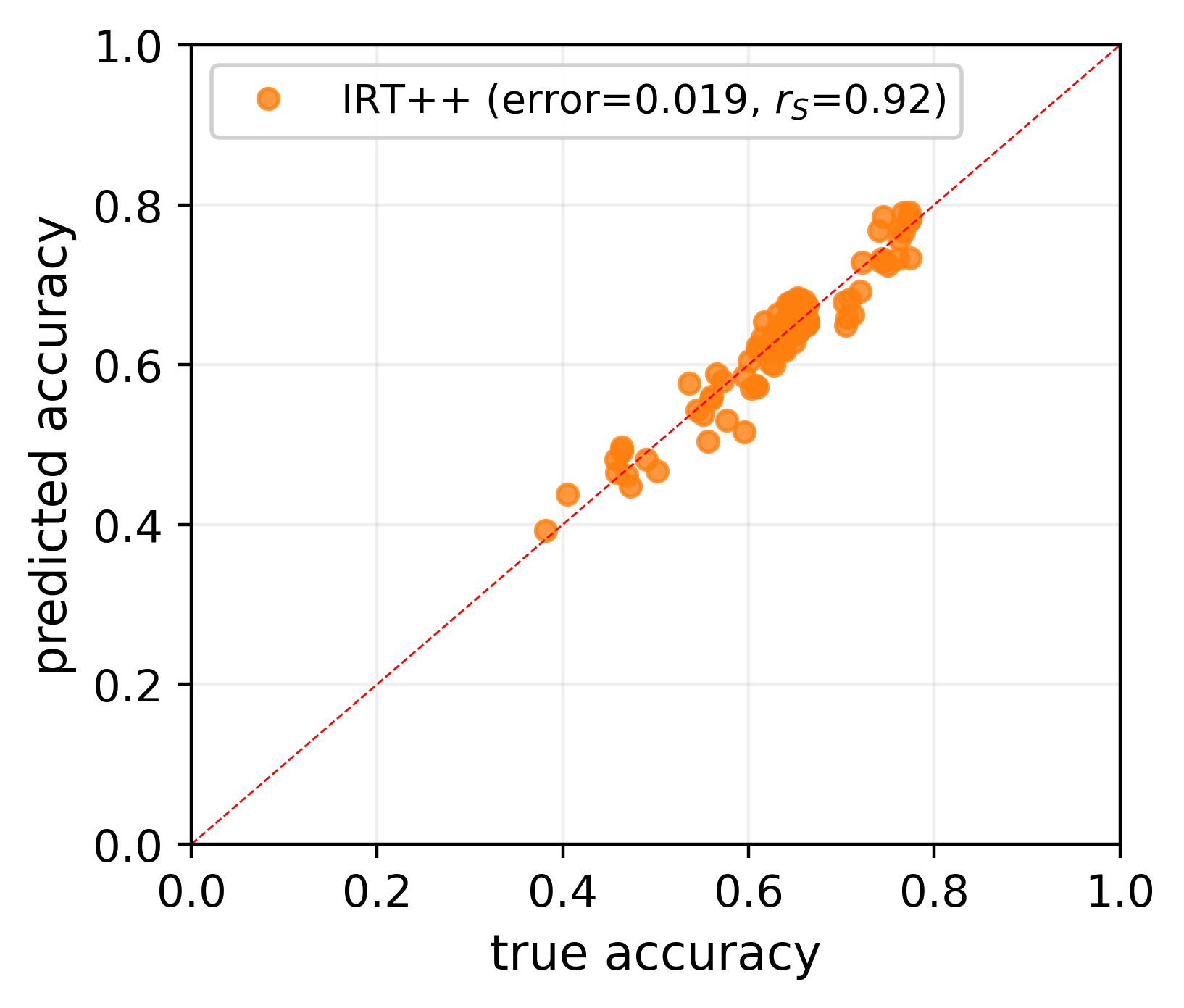}
\vspace{-0.5cm}
\caption{Estimating accuracy on MMLU (true accuracy) using 100 curated examples (predicted accuracy). IRT++, our best-performing evaluation strategy, predicts the accuracy of recent LLMs released between December 30th and January 18th within 1.9\% of their true accuracy on all of MMLU (14K examples).}
\label{fig:mmlu-intro}
\end{figure}

Our work reassesses the need to evaluate LLMs on such large benchmark datasets. In Figure \ref{fig:mmlu-intro} we demonstrate the efficacy of our best evaluation strategy on MMLU, where we compare accuracy estimates obtained from evaluating LLMs on a curated subset of 100 examples (less than 1\% of the examples) to accuracy on all of MMLU, achieving average estimation error under 2\%.

We consider a range of evaluation strategies (\S\ref{sec:strategies}):
\begin{enumerate}
    \item Stratified random sampling as proposed by \citet{perlitz2023efficient} for HELM. This approach is the simplest to use but can result in a large estimation error.
    \item Clustering examples based on LLMs that have already been evaluated. The key idea is to find examples where (in)correct prediction of an LLM implies that it will also be (in)correct on a subset of other examples. This method performs well in some settings but can be unreliable when such correctness patterns are spurious, e.g., when predicting the accuracy of an LLM specialized to a domain. This strategy is inspired by the Anchor Points method \citep{vivek2023anchor} which clusters models' confidence in the correct class for faster evaluation on classification tasks.
    \item New strategies built using Item Response Theory (IRT) \citep{lord1968statistical} for evaluating individuals through standardized tests. Applying IRT to LLMs viewed as testees and benchmarks as tests, we learn representations of examples encoding latent abilities required to perform well on these examples. Clustering these representations allows us to find a more robust evaluation set. Furthermore, using the IRT model, we develop tools for improving benchmark accuracy estimates obtained with an arbitrary set of examples.
\end{enumerate}


We present an extensive evaluation of these strategies on four popular benchmarks (\S\ref{sec:experiments}): Open LLM Leaderboard \citep{open-llm-leaderboard}, MMLU \citep{hendrycks2020measuring}, HELM \citep{liang2022holistic}, and AlpacaEval 2.0 \citep{alpaca_eval}. Our goal is to assess the effectiveness of estimating the performance of LLMs on these benchmarks using a limited number of examples for evaluation. Overall, we conclude that 100 curated examples per scenario are enough to reliably estimate the performance of various LLMs, within about 2\% error on average. Based on our findings we release tiny (100 examples per scenario) versions of every considered benchmark and IRT-based tools for further improving the performance estimation.

\subsection{Related work}

\paragraph{Efficient benchmarking of LLMs}
Multi-dataset benchmarks were introduced to the field of NLP  with the advent of pre-trained models \citep[e.g.][]{wang2018glue}, and constantly evolved in lockstep with language model capabilities \citep{srivastava2022beyond}. The ever-increasing size of models and datasets consequently led to high evaluation costs, triggering changes in reported evaluation to accommodate the costs \citep{Biderman2023PythiaAS}.
\citet{ye2023predictable} considered reducing the number of \emph{tasks} in Big-bench \citep{srivastava2022beyond}.
\citet{perlitz2023efficient} found that evaluation on HELM \citep{liang2022holistic} relies on diversity across datasets, but the number of examples currently used is excessive. We adopt their stratified sampling approach as one of the efficient evaluation strategies.
\citet{vivek2023anchor} proposed clustering evaluation examples based on models' confidence in the correct class for faster evaluation on classification tasks. One of the approaches we consider is based on an adaptation of their method to popular LLM benchmarks with more diverse tasks.

\paragraph{Item response theory (IRT)}


IRT
\citep{cai2016item, van2018handbook, brzezinska2020item,lord1968statistical} is a well-established set of statistical models used in psychometrics to measure the latent abilities of individuals through standardized testing \citep{an2014item, kingston1982feasibility, petersen1982using}, \eg, in GRE, SAT, \etc. Even though IRT methods have been traditionally used in psychometrics, they are becoming increasingly popular among researchers in the fields of artificial intelligence and natural language processing (NLP). For instance, \citet{lalor2016building} propose using IRT's latent variables to measure language model abilities, \citet{vania2021comparing} employs IRT models in the context of language models benchmarking to study saturation (un-discriminability) of commonly used benchmarks, and \citet{rodriguez-etal-2021-evaluation} study several applications of IRT in the context of language models, suggesting that IRT models can be reliably used to: predict responses of LLMs in unseen items, categorize items (\eg, according to their difficulty/discriminability), and rank models. More recently, \citet{zhuang2023efficiently} used IRT for adaptive testing, making testing more efficient. However, the authors do not propose a performance estimator for LLMs but only rank models based on their ability parameters. To the best of our knowledge, IRT has not been used for performance estimation in the context of efficient benchmarking of LLMs. We explore this new path.

\paragraph{Active testing}
Another line of related work is related to active learning \cite{ein-dor-etal-2020-active} and especially active testing. In such works, evaluation examples are chosen dynamically using various criteria \citep{Ji_Logan_Smyth_Steyvers_2021,kossen2021active,zhuang2023efficiently} to minimize annotation costs.
Those methods are somewhat similar to the adaptive IRT which we discuss in \S\ref{sec:concl}.

\section{Problem statement}\label{sec:problem}
In this section, we describe in detail the setup we work on and what are our objectives. Consider that a benchmark is composed of scenarios and possibly sub-scenarios. For example, MMLU and HellaSwag are examples of scenarios\footnote{We consider  MMLU and AlpacaEval as a single scenario each.} of both the Open LLM Leaderboard and HELM, while MMLU has different sub-scenarios like ``marketing", ``elementary mathematical", and so on. Furthermore, each scenario (or sub-scenario) is composed of examples (analogous to ``items" in the IRT literature) that are small tests to be solved by the LLMs--these examples range from multiple-choice questions to text summarization tasks. Our final objective is to estimate the performance of LLMs in the full benchmark, which is given by the average of the performances in individual scenarios (Open LLM Leaderboard, MMLU, AlpacaEval 2.0) or mean-win-rate (HELM). We achieve this objective by first estimating the performance of LLMs in individual scenarios and then aggregating scores. When scenarios have sub-scenarios, it is usually the case that the scenario performance is given by a simple average of sub-scenarios performances. The main concern is that each scenario/sub-scenario is composed of hundreds or thousands of examples, making model evaluation costly. 

In this work, for a fixed benchmark, we denote the set of examples of each scenario $j$ as $\cI_j$, implying that the totality of examples in the benchmark is given by $\cI=\cup_{j}\cI_j$. When an LLM $l$ interacts with an example $i \in \cI_j$, the system behind the benchmarks generates a score that we call ``correctness" and denote as $Y_{il}$. In all the benchmarks we consider in this work, the correctness is either binary, \ie, $Y_{il}\in \{0,1\}$ (incorrect/correct), or bounded, \ie, $Y_{il}\in [0,1]$, denoting a degree of correctness. The second case is applied in situations in which, for instance, there might not be just one correct answer for example $i$. To simplify the exposition in the text, we assume that the score for LLM $l$ in scenario $j$ is just the simple average of the correctness of all items in that scenario, that is, $ \frac{1}{|\cI_j|}\sum_{i\in \cI_j} Y_{il}$. That is not true when different sub-scenarios have different numbers of examples; in that case, one would just have to use a weighted average instead, to make sure every sub-scenario is equally important (in the experiments, we consider this case).

Our objective is to choose a small fraction of examples $\widehat{\cI}_j\subset \cI_j$ such that we can estimate score of a new LLM $l$, \ie, $ \frac{1}{|\cI_j|}\sum_{i\in \cI_j} Y_{il}$, using its correctness evaluated \emph{only} on the examples in $\widehat{\cI}_j\subset \cI_j$, \ie, $\{Y_{il}\}_{i \in \widehat{\cI}_j}$. To intelligently choose $\widehat{\cI}_j$ we assume access to correctness evaluations for a set of LLMs that have been previously evaluated on the entirety of the benchmark. Such correctness data is freely available for many popular benchmarks. In the next section, we describe strategies on how $\widehat{\cI}_j$ can be chosen and how the LLMs performance on the full benchmark can be estimated.

\section{Selecting evaluation examples}\label{sec:strategies}

In this section, we describe strategies on how to select examples from a fixed scenario $j$, \ie, $\cI_j$, obtaining $\widehat{\cI}_j\subset \cI_j$ described in Section \ref{sec:problem}. Ideally, the set of selected examples should be representative of the whole set of items in scenario $j$, that is,
\begin{talign}\label{eq:estimate}
    \sum_{i\in \widehat{\cI}_j}w_i Y_{il} \approx \frac{1}{|\cI_j|}\sum_{i\in \cI_j} Y_{il},
\end{talign}
for nonnegative weights $\{w_i\}_{i\in \widehat{\cI}_j}$ such that $\sum_{i\in \widehat{\cI}_j}w_i=1$. In the next paragraphs, we describe two possible ways of obtaining $\widehat{\cI}_j$ and  $\{w_i\}_{i\in \widehat{\cI}_j}$.

\subsection{Stratified random sampling} 

In some settings \citep[e.g., classifiers][]{katariya2012active}, it is useful to perform stratified random sampling -- subsample examples ensuring the representation of certain groups of data.
Using subscenarios as the strata for stratified random sampling was proposed by \citet{perlitz2023efficient} when sub-sampling examples from HELM scenarios. The authors showed that this is an effective way of sampling examples without too much loss on the ability to rank LLMs by performance.  Examples should be randomly selected from sub-scenarios (with uniform probability) in a way such that the difference in number of examples sampled for two distinct subscenarios is minimal ($\le1$). The rationale behind this method is that, for an effective evaluation, sub-scenarios should be equally represented. The weights are $w_i=1/|\widehat{\cI}_j|$ for all $i\in \widehat{\cI}_j$.

\subsection{Clustering} 

Assessing the performance of LLM's on a randomly sampled subset of examples suffers from extra uncertainty in the sampling process, especially when the number of sampled examples is small. Instead, we consider selecting a subset of representative examples using clustering. \citet{vivek2023anchor} proposed to cluster examples based on the confidence of models in the correct class corresponding to these examples. Representative examples, from these clusters, which they call ``anchor points'', can then be used to evaluate models on classification tasks more efficiently. We adapt their clustering approach to a more general setting, allowing us to extract such anchor points for MMLU, AlpacaEval 2.0, and all scenarios of the Open LLM Leaderboard and HELM.


First, we propose to group examples by model correctness, expecting some examples would represent the rest. Ideally, if example $i$ is an anchor point, then there will be a big set of examples on which models are correct if and only if they get example $i$ correct. The same idea applies when correctness is given by a number in $[0,1]$. Assume that we want to select $K$ anchor points and have access to the training set $\cD_{tr}=\{Y_{l}\}_{l \in \cL_{tr}}$, where $Y_{l}$ is a vector in which each entry is given by the correctness score $Y_{il}$ for all examples $i\in \cI_j$. We represent each example $i\in \cI_j$ by the embedding $E_i\in \reals^{|\cL_{tr}|}$ which is a vector with entries given by $Y_{il}$ for $l\in \cL_{tr}$, and then run $K$-Means \citep{hastie2009elements} with the number of clusters being equal $K$. After the $K$ centroids are obtained, we find the closest example to each centroid, and each of those points will compose  $\widehat{\cI}_j$. For a new LLM $l\not\in \cL_{tr}$ to be evaluated, we can obtain an estimate for its performance using the estimate in equation \ref{eq:estimate} by setting $w_i$ as the fraction of points in $\cI_j$ assigned to cluster/anchor point $i$. This method is compelling and simple in detecting anchor points. Still, it can suffer from distribution shifts since correctness patterns can vary, \eg, in time, and from the curse of dimensionality when $|\cL_{tr}|$ is big. Our second approach is intended to be more robust to those problems.

The second approach we propose is using item response theory (IRT) representation of examples, detailed in Section \ref{sec:irt}, as our embeddings $E_i$. The IRT model creates a meaningful representation for each example $i$ based on their difficulty and the abilities required to respond to those examples correctly. This approach immediately solves the dimensionality problem, since $E_i$ is relatively low-dimensional\footnote{In our experiments, the dimension of $E_i$ is $\leq16$.}, and potentially alleviates the distribution shift problem if the IRT model reasonably describes the reality and the example representations are stable. As IRT should represent which examples have similar difficulty and require similar abilities, the anchors represent exactly what we looked for. The weight $w_i$ is given by the fraction of examples in $\cI_j$ assigned to cluster/anchor point $i$.

\section{Better performance estimation with IRT}\label{sec:irt}

In this section, we propose ways of enhancing performance estimates by using IRT models. We start by discussing the case where $Y_{il}\in\{0,1\}$, that is, the $l$ responds to the example $i\in \cI$ correctly or not. We later also discuss the case where $Y_{il}\in[0,1]$.

\subsection{The IRT model}
The two-parameter multidimensional IRT model assumes that the probability of the LLM $j$ getting example $i$ correctly is given by
\begin{talign}\label{eq:irt_model}
p_{il}&\triangleq\Pr(Y_{il} = 1\mid \theta_l, \alpha_i, \beta_i) =\frac{1}{1 + \exp(- \alpha_i^\top\theta_l + \beta_i)},
\end{talign}
where $\theta_l\in\reals^d$ denotes the unobserved abilities of LLM $l$, while $\alpha_i\in\reals^d$ dictates which dimensions of  $\theta_l$ are required from model $l$ to respond to example $i$ correctly. In this formulation, $\beta_i\in\reals$ can be viewed as a bias term that regulates the probability of correctness when $\theta_l=0$. We use IRT parameter estimates as example representations referred to in Section \ref{sec:strategies}. Specifically, we take $E_i=(\widehat{\alpha}_i, \widehat{\beta}_i)$, where $\widehat{\alpha}_i$ and $\widehat{\beta}_i$ are point estimates for the parameters of example $i$. In the next sections, we introduce two estimators for the performance of an LLM, propose a simple solution for the case $Y_{il}\not\in\{0,1\}$, and describe model fitting.

\subsection{IRT-based LLM performance estimation}

\paragraph{The performance-IRT (p-IRT) estimator.} Assume that we are interested in estimating the performance of a model $l\not \in \cL_{tr}$ on scenario $j$ and that point estimates of example parameters, $(\widehat{\alpha}_i,\widehat{\beta}_i)$, have been computed, using a training set, for all examples in all scenarios, including examples $i \in \cI_j$. Formally, we are interested in approximating 
\begin{talign}
    Z_{jl}\triangleq\frac{1}{|\cI_j|}\sum_{i\in\cI_j}Y_{il}
\end{talign}
Now, assume that we have run model $l$ on a subset of examples from scenario $j$, obtaining responses $\{Y_{i_0 l}, \cdots, Y_{i_k l}\}$ for the examples $\widehat{\cI}_j = \{i_0,\cdots, i_k\}$. Let $\widehat{\theta}_l$ denote the estimate for $\theta_l$ after observing $\widehat{\cI}_j$ and possibly a bigger set of examples coming from different scenarios. To obtain that estimate, we maximize the log-likelihood of the freshly observed data with respect to $\theta_l$, fixing examples' parameters. This procedure is equivalent to fitting a logistic regression model, which is an instance of the well-studied $M$-estimation procedure. 


Because $Z_{jl}$ is a random variable, we approximate it by estimating the conditional expectation 
\begin{talign*}
    &\Ex[Z_{jl} \mid Y_{i_0 l}, \cdots, Y_{i_k l}]=\\
    &=\frac{1}{|{\cI}_j|}\sum_{i \in {\cI}_j}\Ex[Y_{il}\mid Y_{i_0 l}, \cdots, Y_{i_k l}]\\
    &=\frac{1}{|{\cI}_j|}\left(\sum_{i\in \widehat{\cI}_j} Y_{i l}+\sum_{i \in {\cI}_j\setminus\widehat{\cI}_j  }p_{il}\right)\\
    &=\frac{\hat{\lambda}}{|\widehat{\cI}_j|}\sum_{i\in \widehat{\cI}_j} Y_{i l}+\frac{1-\hat{\lambda}}{|{\cI}_j\setminus\widehat{\cI}_j|}\sum_{i \in {\cI}_j\setminus\widehat{\cI}_j  }p_{il}
\end{talign*}
which is the best approximation for $Z_{jl}$ in the mean-squared-error sense. Here, $\hat{\lambda} = |\widehat{\cI}_j|/|{\cI}_j| \in [0,1]$ is a weight that gives more or less importance to the observed set $\widehat{\cI}_j$ in the performance computation depending on how big that set is. The probability $p_{il}=\Pr(Y_{il} = 1\mid\theta_l, \alpha_i, \beta_i)$ is given by the IRT model in Equation \ref{eq:irt_model}. The estimator for the conditional expectation is then given by
\begin{talign}\label{eq:pirt}
    \hat{Z}^{\text{p-IRT}}_{jl} &\triangleq \widehat{\Ex}[Z_{jl} \mid Y_{i_0 l}, \cdots, Y_{i_k l}]\\
    &=\frac{\hat{\lambda}}{|\widehat{\cI}_j|}\sum_{i\in \widehat{\cI}_j} Y_{i l}+\frac{1-\hat{\lambda}}{|{\cI}_j\setminus\widehat{\cI}_j|}\sum_{i \in {\cI}_j\setminus\widehat{\cI}_j  } \hat{p}_{il}\nonumber
\end{talign}
where $\hat{p}_{il}\triangleq \Pr(Y_{il} = 1\mid \widehat{\theta}_l, \widehat{\alpha}_i, \widehat{\beta}_i)$. We call the estimator in \ref{eq:pirt} by Performance-IRT (p-IRT) estimator.

The idea behind p-IRT is that we can estimate the performance of a model on unseen data making use of the IRT model. This is especially useful if we can fit $\widehat{\theta}_l$ using data from many scenarios: even though we observe just a few samples per scenario, p-IRT will leverage the whole available data, permitting better estimates for the performance of the LLM for all scenarios. Conditional on the training set, the estimator p-IRT has low variance when $\widehat{\theta}_l$ is obtained from a large dataset and a small bias if the IRT model is reasonably specified. Given that $\widehat{\theta}_l$ is potentially estimated using a large sample, it is worth understanding what that implies about our estimates $\hat{Z}^{\text{p-IRT}}_{jl}$'s in the asymptotic regime. To facilitate our analysis, assume for a moment that the true values of $(\alpha_i, \beta_i)$'s for all $i\in\cI$ are known. As previously commented, estimating $\theta_l$ is equivalent to fitting a logistic regression and, under mild conditions, we should have $\hat{\theta}_l\to \theta_l$ in probability as $|\widehat{\cI}|\to\infty$ \citep{fahrmeir1985consistency}. We depart from this condition and show that $|\widehat{\Ex}[Z_{jl} \mid Y_{i_0 l}, \cdots, Y_{i_k l}]-\Ex[Z_{jl} \mid Y_{i_0 l}, \cdots, Y_{i_k l}]|\to 0$ in probability as $|\widehat{\cI}|\to\infty$; that is, p-IRT converges in probability to the best approximation of $Z_{jl}$, $\Ex[Z_{jl} \mid Y_{i_0 l}, \cdots, Y_{i_k l}]$.

\begin{proposition}\label{prop:pirt-consistency}
Assuming that  (i) $\hat{\theta}_l\to \theta_l$ in probability as $|\widehat{\cI}|\to\infty$ and that (ii) the true values of $(\alpha_i, \beta_i)$'s for all $i\in\cI$ are known and $\sup_{i\in\cI}\norm{\alpha_i}_2\leq c$ for a universal constant $c$, we have that
\[\textstyle
|\widehat{\Ex}[Z_{jl} \mid Y_{i_0 l}, \cdots, Y_{i_k l}]-\Ex[Z_{jl} \mid Y_{i_0 l}, \cdots, Y_{i_k l}]|\to 0
\]
in probability as $|\widehat{\cI}|\to\infty$.
\end{proposition}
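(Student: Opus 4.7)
The plan is to directly subtract the estimator from the target conditional expectation and bound the resulting error using the Lipschitz property of the sigmoid together with the uniform bound on the discrimination parameters. Since the observed terms $Y_{il}$ for $i \in \widehat{\cI}_j$ appear identically in both expressions, the entire discrepancy comes from replacing $p_{il}$ by $\hat{p}_{il}$ in the second sum.

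More concretely, I would first write
\begin{talign*}
&|\widehat{\Ex}[Z_{jl} \mid Y_{i_0 l}, \cdots, Y_{i_k l}]-\Ex[Z_{jl} \mid Y_{i_0 l}, \cdots, Y_{i_k l}]| \\
&\quad = \left|\frac{1-\hat{\lambda}}{|\cI_j\setminus\widehat{\cI}_j|}\sum_{i\in\cI_j\setminus\widehat{\cI}_j}(\hat{p}_{il}-p_{il})\right| \\
&\quad \le \sup_{i\in\cI_j\setminus\widehat{\cI}_j}|\hat{p}_{il}-p_{il}|,
\end{talign*}
where the last inequality uses $(1-\hat{\lambda})\le 1$ and the triangle inequality. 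The problem therefore reduces to a uniform control of $|\hat{p}_{il}-p_{il}|$ across all examples.

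Next, I would exploit that the logistic link $\sigma(t)=1/(1+e^{-t})$ is globally Lipschitz with constant $1/4$. Under assumption (ii), $(\alpha_i,\beta_i)$ are known (so no estimation error in those parameters), and by Cauchy--Schwarz together with $\sup_{i}\|\alpha_i\|_2\le c$,
\begin{talign*}
|\hat{p}_{il}-p_{il}| &= |\sigma(\alpha_i^\top\hat{\theta}_l-\beta_i)-\sigma(\alpha_i^\top\theta_l-\beta_i)| \\
&\le \tfrac{1}{4}\,\|\alpha_i\|_2\,\|\hat{\theta}_l-\theta_l\|_2 \le \tfrac{c}{4}\,\|\hat{\theta}_l-\theta_l\|_2,
\end{talign*}
and this bound is uniform in $i$. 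Combining with the previous display gives
\[
|\widehat{\Ex}[Z_{jl} \mid \cdots]-\Ex[Z_{jl} \mid \cdots]|\le \tfrac{c}{4}\,\|\hat{\theta}_l-\theta_l\|_2.
\]

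Finally, assumption (i) says $\hat{\theta}_l\to\theta_l$ in probability as $|\widehat{\cI}|\to\infty$, so by the continuous mapping theorem $\|\hat{\theta}_l-\theta_l\|_2\to 0$ in probability, and the desired conclusion follows. I do not expect any serious obstacle: the only delicate point is that the bound on $|\hat{p}_{il}-p_{il}|$ must hold \emph{uniformly} in $i$ (since we sum over the unobserved examples), and this is precisely what the uniform norm bound in assumption (ii) provides. Without such a bound the Lipschitz argument could degrade for examples whose $\alpha_i$ grows, so the role of (ii) is to allow the sup-norm control that turns pointwise convergence of $\hat{\theta}_l$ into convergence of the average.
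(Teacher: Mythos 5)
Your proposal is correct and follows essentially the same route as the paper's proof: the observed terms cancel, the remaining discrepancy is controlled via the $1/4$-Lipschitz property of the logistic link together with Cauchy--Schwarz and the uniform bound $\sup_i\norm{\alpha_i}_2\le c$, and assumption (i) finishes the argument. The only cosmetic difference is that you pass to a supremum over $i$ while the paper bounds the average term by term; the resulting bound $c\,\norm{\hat{\theta}_l-\theta_l}_2$ (up to the constant $1/4$, which the paper drops) is identical.
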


We note two limitations of p-IRT that can hinder its effectiveness in practice. First, it does not promptly allow sample weighting, limiting its use of anchor points; second, if the predicted probabilities $\hat{p}_{il}$'s are inaccurate, \eg, because of model misspecification, then the performance of p-IRT will deteriorate.  

\paragraph{The generalized p-IRT (gp-IRT) estimator.} Our final estimator builds upon p-IRT to overcome its limitations. Assume that the estimators in equations \ref{eq:estimate} and \ref{eq:pirt} are obtained as a first step after the collection of examples in $\widehat{\cI}_j$. The idea is to compute a third estimator $\hat{Z}^{\text{gp-IRT}}_{jl}$ given by a convex combination of the first two
\begin{talign}\label{eq:gpirt}
    &\hat{Z}^{\text{gp-IRT}}_{jl} \triangleq \lambda\sum_{i\in \widehat{\cI}_j}w_iY_{il} + (1-\lambda)\hat{Z}^{\text{p-IRT}}_{jl}
\end{talign}
where $\lambda$ is a number in $[0,1]$ that is chosen to optimize the performance of that estimator. To choose $\lambda$, we first note that using random sampling (or anchor points) implies low bias but potentially high variance (when $\widehat{\cI}_j$ is small) for $\sum_{i\in \widehat{\cI}_j}w_iY_{il}$. As $\widehat{\cI}_j$ grows, its variance decreases. On the other hand, conditional on the training set, the variance of $\hat{Z}^{\text{p-IRT}}_{jl}$ is small, especially when $\widehat{\theta}_l$ is fitted with data from many scenarios, but its bias can be high when the IRT model is misspecified and does not vanish with the growing sample size. Thus, good choice of $\lambda$ increases with $\widehat{\cI}_j$.

We choose $\lambda$ based on a heuristic derived from \citet{song1988minimal}'s Corollary 2. It tells us that the optimal linear combination of any two estimators $\hat{T}_1$ and $\hat{T}_2$ (when the sum of the weights is one) depends on the biases, variances, and covariance of the two estimators. If the first estimator is unbiased and the variance of the second is zero, we can show that the optimal estimator is $\lambda\hat{T}_1 + (1-\lambda)\hat{T}_2$, where $\lambda = b^2_2/(b^2_2+v_1)$, $b_2$ denotes $\hat{T}_2$'s bias, and $v_1$ denotes $\hat{T}_1$'s variance. To apply this result, we assume that the main factors that might prevent gp-IRT from being a good estimator are the variance of the first estimator and the bias of the second one. Then we approximate the first estimator's bias and the second estimator's variance by zero. When our first estimator is obtained by random sampling we take
\[
\lambda = \frac{\hat{b}^2}{\hat{\sigma}^2/|\widehat{\cI}_j|+\hat{b}^2}
\]
for two constants $\hat{\sigma}^2$ and $\hat{b}^2$. The first constant, $\hat{\sigma}^2$, is obtained by computing the average sample variance of $Y_{il}$, $i\in\cI_j$, across LLMs in the training set. The second constant, $\hat{b}^2$, is obtained by approximating the IRT bias. We (i) split the training set into two subsets of LLMs; (ii) fit an IRT model in the first part using data from all scenarios; (iii) fit the ability parameter for all the LLMs in the second part using half of the examples of all scenarios; (iv) use that IRT model to predict the correctness (using predicted probabilities) of the unseen examples of scenario $j$ for the models in the second split; (v) average predictions and actual correctness within models, obtaining predicted/actual scenarios scores; (vi) compute their absolute differences, obtaining individual error estimates for models; (vii) average between models, obtaining a final bias estimate, and then square the final number. To give some intuition on how $\lambda$ is assigned, Figure \ref{fig:lambda} depicts $\lambda$ as a function of $\hat{b}$ and $|\widehat{\cI}_j|$ when $\hat{\sigma}^2=.01$. From that figure, we see that if the IRT model bias is small, more weight will be given to p-IRT. The curves are steeper when $|\widehat{\cI}_j|$ is small because the variance of the first estimator decreases faster when $|\widehat{\cI}_j|$ is small. 
\begin{figure}[h]
\centering
\includegraphics[width=.7\linewidth]{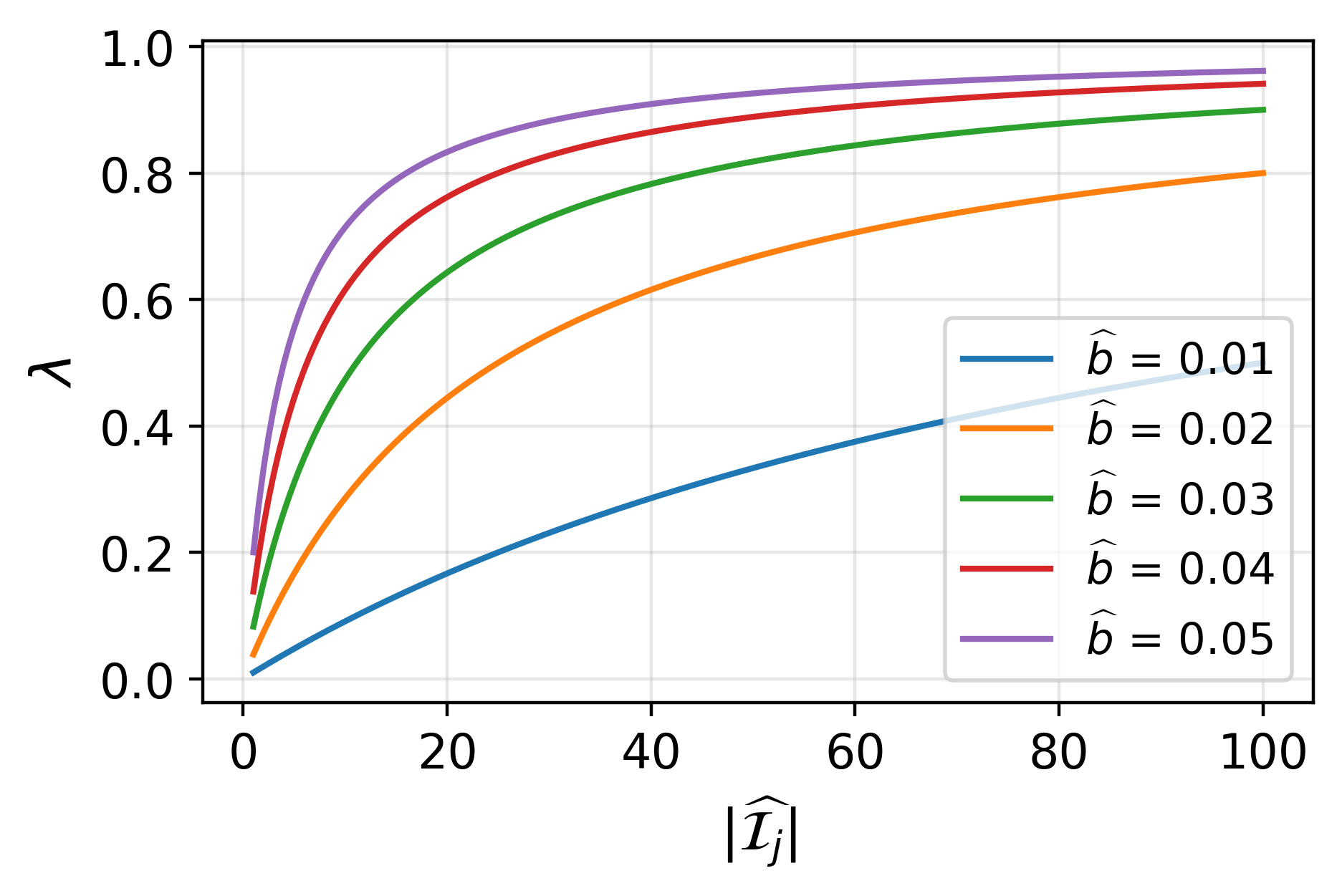}
\vspace{-0.5cm}
\caption{Understanding the effect of IRT bias and sample size $|\widehat{\cI}_j|$ in the gp-IRT construction: both quantities are positively related to the weight we give to the raw data in performance estimation.}
\label{fig:lambda}
\end{figure}
When the first estimator is obtained by a method that implies an estimator with smaller variance, \eg, anchor points, we apply the same formula but divide $\hat{\sigma}^2$ by a constant $>1$. By default, we divide $\hat{\sigma}^2$ by $4$ which is equivalent to halving the standard deviation of the first estimator.
\subsection{Using IRT when $Y_{il}$ is not binary}\label{sec:thresh}

There are situations in which $Y_{il}\notin \{0,1\}$ but $Y_{il}\in[0,1]$. For example, in AlpacaEval 2.0, the response variable is bounded and can be translated to the interval $[0,1]$. Also, some scenarios of HELM and the Open LLM Leaderboard have scores in $[0,1]$.
We propose a simple and effective fix. The idea behind our method is to binarize $Y_{il}$ by defining a second variable $\Tilde{Y}_{il} = \ones[Y_{il}\geq c]$, for a scenario-dependent constant $c$. More concretely, for each scenario $j$, we choose $c$ such that 
\begin{talign*}
    \sum_{i\in\cI_j,l\in\cL_{tr}} Y_{i l} \approx \sum_{i\in\cI_j,l\in\cL_{tr}}\ones[Y_{il}\geq c].
\end{talign*}
In that way,  approximating the average of $\Tilde{Y}_{il}$ and $Y_{i l}$ should be more or less equivalent. Given that $\Tilde{Y}_{il}\in\{0,1\}$, we can use the standard IRT tools to model it.

\subsection{Fitting the IRT model}\label{sec:fitting}

For the estimation procedure, we resort to variational inference. In particular, we assume that $\theta_l \sim N(\mu_{\theta}\ones_d, 1/u_{\theta}I_d)$, $\alpha_{i} \sim N(\mu_{\alpha}\ones_d, 1/u_{\alpha}I_d)$, and $\beta_{i} \sim N(\mu_\beta, 1/u_\beta)$. To take advantage of software for fitting hierarchical Bayesian models  \citep{lalor2023py}, we introduce (hyper)priors for the prior parameters $\mu_{\theta} \sim N(0, 10)$, $u_{\theta} \sim \Gamma(1, 1)$, $\mu_{\alpha} \sim N(0, 10)$, $u_{\alpha} \sim \Gamma(1, 1)$, $\mu_\beta \sim N(0, 10)$, and $u_\beta \sim \Gamma(1, 1)$. Finally, to obtain point estimates for the model and example-specific parameters $\theta_l$, $\alpha_{i}$, and $\beta_{i}$, we use the means of their variational distributions. To select the dimension of the IRT model during the fitting procedure, we run a simple validation strategy in the training set and choose the dimension that maximizes the prediction power of the IRT model in the validation split--we consider the dimensions in $\{2,5,10,15\}$.
\begin{figure*}[ht!]
\centering
\includegraphics[width=\textwidth]{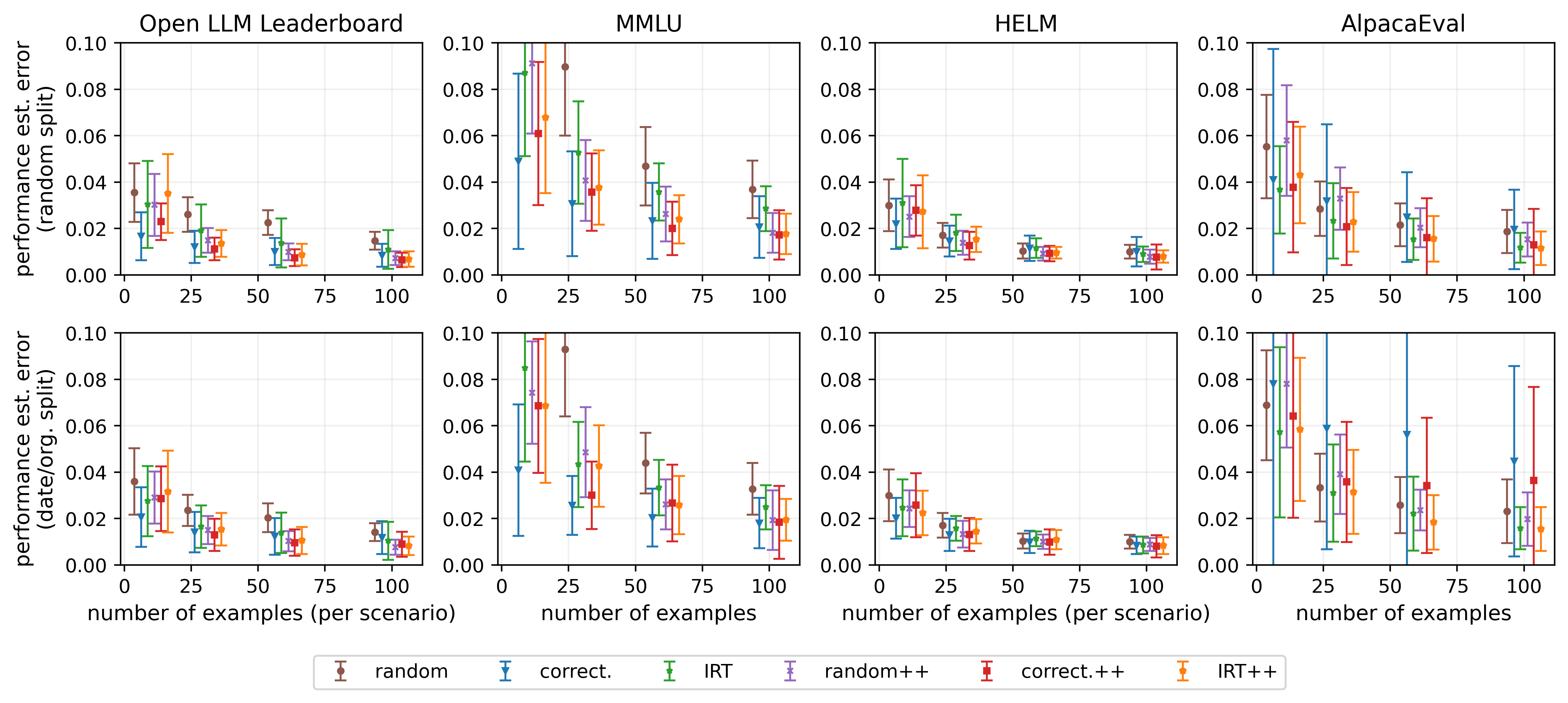}
\vspace{-0.75cm}
\caption{Performance estimation error per benchmark (columns) tested on random (top row) and recent (bottom row) LLMs for increasing number of evaluation examples. 100 examples per scenario is sufficient to achieve $\approx$2\% average performance estimation error across benchmarks and evaluated LLMs. This corresponds to 600 out of 29K examples for Open LLM Leaderboard, 100 out of 14K examples for MMLU, 1000 out of 10K examples for HELM, and 100 out of 800 examples for AlpacaEval 2.0.}
\label{fig:leaderboard_performance_acc}
\end{figure*}

\section{Assessing evaluation strategies}
\label{sec:experiments}
We assess the ability of the considered evaluation strategies to estimate the performance of LLMs on four popular benchmarks. For a given LLM and a benchmark, each evaluation strategy estimates the performance using evaluation results of this LLM on a given number of examples. We then compare this estimate to the true value, \ie, the performance of this LLM on the complete benchmark.

\paragraph{Evaluation pipeline}
For each benchmark, we first collect publicly available correctness data ($Y_{il}$'s) for a set of LLMs $\cL$ that have been previously evaluated on this benchmark. Recall that the benchmark is a set of examples $\cI$ consisting of $J$ disjoint scenarios examples $\cI_j$ such that $\cI=\cup_{j\in[J]}\cI_j$. We use correctness data corresponding to a subset of LLMs $\cL_{tr}$, \ie, $\cD_{tr}=\{Y_{il}\}_{l\in \cL_{tr}, i\in\cI}$ to (i) find anchor points $\hat{\cI}_j$ for each one of the scenarios $j\in [J]$ as described in Section \ref{sec:strategies} and (ii) to obtain estimates for the IRT parameters $\{(\alpha_i,\beta_i)\}_{i\in\cI}$ as described in Section \ref{sec:irt}. We call this ``train'' set of models as their correctness data is used to identify anchor points and fit the parameters associated with our evaluation strategies. The remaining set of ``test'' models $\cL_{te}$ is used to quantify the error of our evaluation strategies in practice. For each LLM in the test set, $l\in \cL_{te}$, we observe its correctness on the anchor points, \ie, $\{Y_{il}\}_{i\in\hat{\cI}_j}$, and use it to obtain benchmark performance estimates as described in Sections \ref{sec:strategies} and \ref{sec:irt}. The estimate is then compared to the ground truth, \ie, performance of this LLM on the entirety of the benchmark.

We consider two train-test model split scenarios: (i) random split and (ii) by date, \ie, using the most recent models for testing. The latter split better represents practical use cases, while also being more challenging as it is likely to result in a distribution shift between the train and test models due to improving model capabilities over time that might affect the effectiveness of anchor points and the IRT model. 

\paragraph{Benchmarks and models} 
We describe the size and composition of the four benchmarks, as well as the corresponding LLMs (see Appendix \ref{sec:benchmark} for additional details):
\begin{itemize}
    \item HuggingFace's Open LLM Leaderboard \citep{open-llm-leaderboard} consists of 6 scenarios, approx. 29K examples in total. Performance on each of the scenarios is measured with accuracy and the overall benchmark performance is equal to the average of scenario accuracies.
    We collect evaluation results for 395 LLMs from the Leaderboard's website and use 75\% for training and 25\% for testing (split either randomly or by date as described above).
    

    \item MMLU \citep{hendrycks2020measuring} is a multiple choice QA scenario consisting of 57 subjects (subscenarios) comprising approx. 14K examples. Performance on MMLU is measured by averaging the accuracies on each of the categories. MMLU is one of the 6 scenarios of the Open LLM Leaderboard and we consider the same set of 395 LLMs and train-test splits. The reason to consider it separately is its immense popularity when comparing LLMs \citep{touvron2023llama,achiam2023gpt,team2023gemini} and inclusion into several other benchmarks.

    \item For HELM \citep{liang2022holistic}, we use HELM Lite v1.0.0, which has the 10 core scenarios (total of approx. 10K evaluation examples) and 30 models that have their performances registered for all scenarios. Performance metrics for each scenario vary and can be non-binary (\eg, F1 score), and the overall performance on the benchmark is measured with mean win rate across scenarios.
    For this benchmark, the dates models were added are not available. Instead, we split models based on the organizations that trained them to create more challenging train-test splits, \eg, all OpenAI models are either in train or in test. For the random train-test split we use 11-fold cross-validation. That is, we partition the set of all LLMs into $k=11$ parts and, for each one of these parts, we use one of them to test and $k-1$ parts for training. Then, we average the results over the choice of the testing part.


    \item AlpacaEval 2.0 \citep{alpaca_eval} consists of 100 LLMs evaluated on 805 examples. Although it is a fairly small benchmark, evaluation is expensive as it requires GPT-4 as a judge. For each input, GPT-4 compares the responses of a candidate LLM and a baseline LLM (currently also GPT-4) and declares a winner. The average win rate\footnote{AlpacaEval 2.0 considered in the experiments uses continuous preferences instead of binary.} is used to measure the overall performance. When splitting the data by date, we pick $25\%$ most recent models for testing and the rest for training. For the random split, we employ 4-fold cross-validation analogous to HELM.
    
\end{itemize}



\paragraph{Evaluation strategies}
We consider 3 strategies presented in \S\ref{sec:strategies} for selecting a subset of examples for efficient evaluation: ``random'' for stratified random sampling, ``correctness'' for clustering correctness of models in the train set, and ``IRT'' for clustering the example representations obtained from the IRT model fit on the train set. For each strategy, we evaluate the vanilla variation, \ie, simply using the performance of a test LLM on the (weighted) set of selected examples to estimate its performance on the full benchmark, and ``++'' variation that adjusts this estimate using the IRT model as described in equation \eqref{eq:gpirt}. In total, we assess six evaluation strategies. Results are averaged over 5 restarts.

\begin{figure*}[ht!]
\centering
\includegraphics[width=\linewidth]{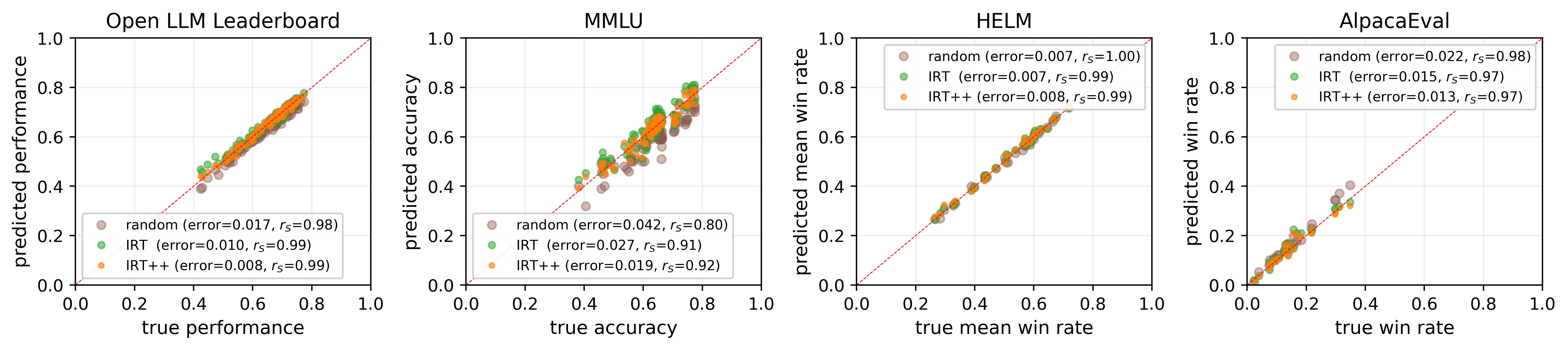}
\vspace{-0.75cm}
\caption{Predicted performance compared with true performance for the four benchmarks (columns) and recent LLMs. We verify the efficacy of the evaluation strategies (IRT and IRT++) we chose to construct tinyBenchmarks.}
\label{fig:leaderboard_performance_individual}
\end{figure*}

\paragraph{Key findings} We investigate the effectiveness of strategies as we increase the number of examples available for evaluating test LLMs. Results for both train-test split scenarios are presented in Figure \ref{fig:leaderboard_performance_acc} (see also Figure \ref{fig:leaderboard_performance_rank} for Spearman's rank correlations). Our main conclusions are:
\begin{itemize}
    \item Our approach to reducing evaluation costs is \emph{effective}. The best-performing strategies achieve estimation error within 2\% on all benchmarks with 100 examples or less per dataset or scenario. For example, for MMLU this reduces the evaluation cost by a factor of 140 (from 14k to 100). For Open LLM Leaderboard even 30 examples per scenario is enough, reducing the evaluation cost by a factor of 160 (from 29K to 180).
    \item  Most strategies perform well when there is a temporal shift between the train and test LLM's (see the lower row of plots in Figure \ref{fig:leaderboard_performance_acc} for the results with ``by date'' split). Thus our approaches for reducing evaluation costs remain \emph{practical} when evaluating the performance of newer, more capable LLMs and can help save GPU hours when evaluating future LLMs and/or checkpoints during pre-training.
    \item \emph{IRT-based methods} (``IRT'' and ``IRT++'') perform consistently well across benchmarks and train-test splits. The gp-IRT (``++'') variation always improves or matches its vanilla counterpart, while adding only a few seconds to the evaluation time (see Figure \ref{fig:running_item_inference}). Thus we use the IRT-based anchor examples to construct \href{https://huggingface.co/tinyBenchmarks}{tiny versions} tiny versions (100 examples per scenario) of each of the benchmarks and release them along with the gp-IRT tool (code and pre-trained IRT model) for efficient evaluation of future LLMs. We present additional evaluations of tinyBenchmarks in Figure \ref{fig:leaderboard_performance_individual} for one of the 5 random seeds in which the random sampling underperforms. In Appendix \ref{sec:mmlu}, we conduct an exploratory analysis of the examples comprising tinyMMLU.
\end{itemize}

\begin{figure}[h]
\centering
\includegraphics[width=\linewidth]{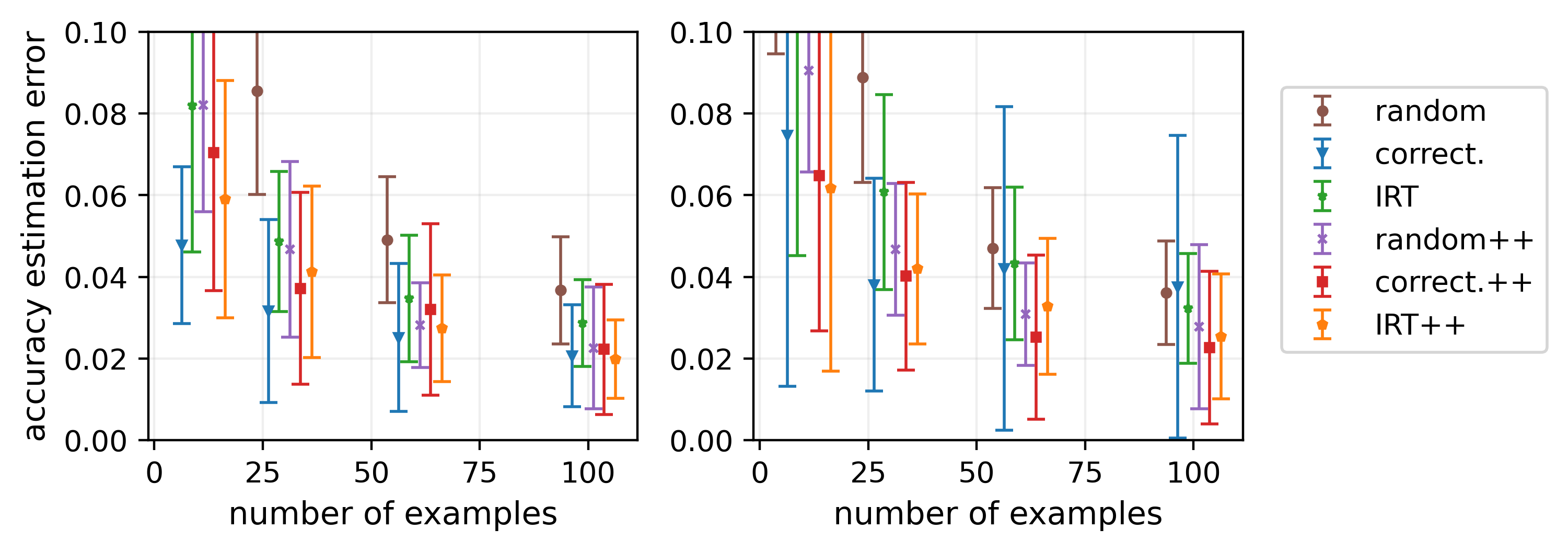}
\vspace{-0.75cm}
\caption{Estimation error on specialized LLMs (right) compared to error on random LLMs (left) on MMLU. Correctness-based example selection is affected the most by this distribution shift.}
\label{fig:mmlu_performance_specialized_models_acc}
\end{figure}

\paragraph{Specialized LLMs} In our previous experiments the test set of LLMs consisted of either a random subset of models or the most recent ones. Both of these test sets are dominated by base and instruction-tuned LLMs. Here we assess the ability of the considered strategies to predict the performance of specialized LLMs, \ie, models fine-tuned for specific domains such as code, biology, or finance. We consider MMLU benchmark and collect a new hand-picked test set of 40 specialized models. Such models are likely to have unique strengths and perform well in specific MMLU categories while relatively underperforming on others. Thus, their correctness patterns might be different from those in the train set, posing a challenge for our evaluation strategies. We present results in Figure \ref{fig:mmlu_performance_specialized_models_acc}. 

As we anticipated, the correctness-based anchor strategy deteriorates when tested on specialized LLMs. In contrast to the IRT-based anchors that are only slightly affected, demonstrating their robustness and supporting our choice to use them for tinyBenchmarks construction.


\begin{figure}[h]
\centering
\includegraphics[width=.75\linewidth]{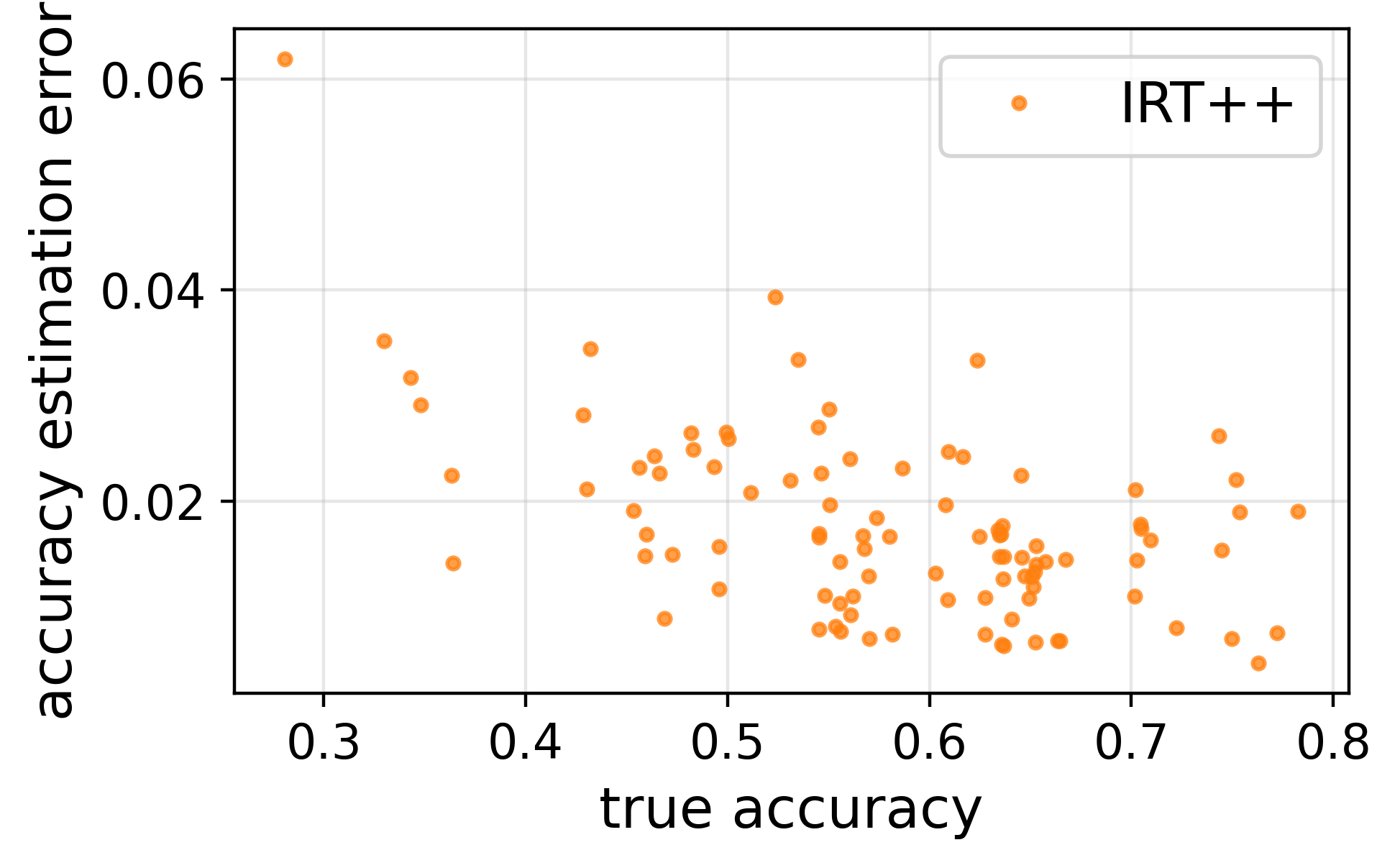}
\vspace{-0.5cm}
\caption{Spread of estimation errors across a random subset of LLMs with varying capabilities on MMLU. The error tends to be slightly lower for more capable models. The worst case error across almost all models is $\leq 4\%$.}
\label{fig:performance2_mini_mmlu}
\end{figure}

\paragraph{Estimation error analysis} We present a more detailed view of the estimation error of the best performing ``IRT++'' evaluation strategy on MMLU with 100 examples. In Figure \ref{fig:performance2_mini_mmlu} we plot estimation error against the actual accuracy of 99 test LLMs for a random train-test split. Our strategy can estimate the performance of more capable LLMs slightly better, although there is no strong dependency. We also note that the estimation error never exceeds 4\% (except for one LLM with extremely low performance). Recall that the average error is 2\% as shown in Figure \ref{fig:leaderboard_performance_acc}, supporting the reliability of our evaluation approach.







\begin{figure*}[ht!]
\centering
\includegraphics[width=\linewidth]{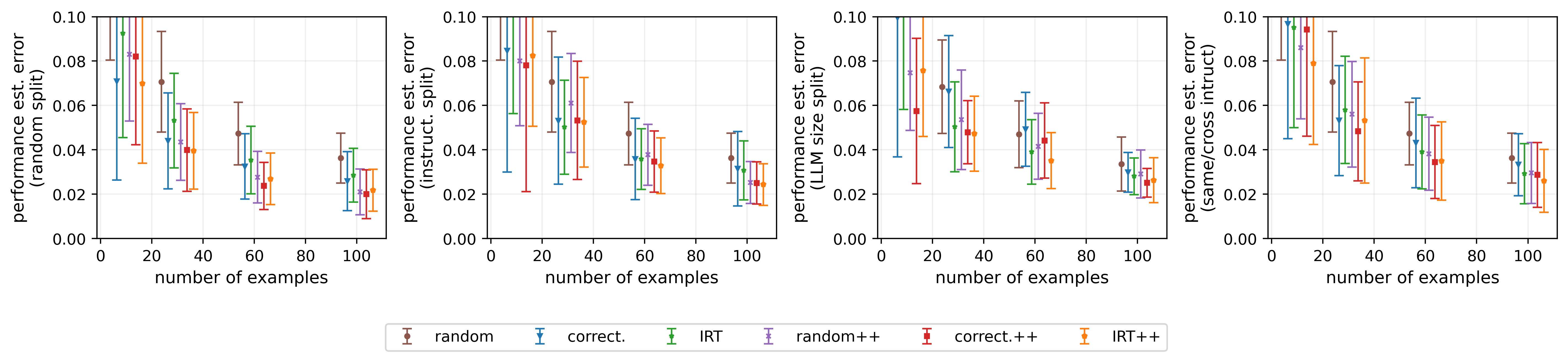}
\vspace{-0.75cm}
\caption{Estimation error when predicting the performance of prompt templates. The results demonstrate that using our methods for efficient prompt-based model evaluation is a promising application.}
\label{fig:icl_templates}
\end{figure*}

\section{Conclusion}
\label{sec:concl}

In this paper, we demonstrate it is possible to accurately assess the capabilities of LLMs with a fraction (sometimes two orders of magnitude smaller) of the examples in common benchmark datasets by leveraging models of educational assessments from psychometrics. This leads directly to savings in terms of the monetary costs associated with evaluating LLMs, but also the computational and environmental costs. For practitioners, the computational cost savings are especially convenient because they enable them to evaluate LLMs more frequently during fine-tuning and prompt engineering.

Based on our results we are releasing tinyBenchmarks, pre-selected subsets of examples from the widely adopted LLM benchmarks. tinyBenchmarks are simply small datasets that are straightforward to use to evaluate LLMs cheaply. We are also releasing an IRT-based tool to enhance performance estimation. The tool provides code and IRT parameters trained on the corresponding benchmarks and can be run on a CPU in a few seconds.

\subsection{Extensions}

\paragraph{Prompt evaluation} A persistent challenge in prompt-based model evaluation is the influence the prompting setup has on model predictions \citep[see, \eg,][]{lu2021fantastically, mishra2021reframing, min2022rethinking, kim2022ground, weber2023mind, wei2023larger}.
We can use the previously described approaches to make predictions across different prompting setups. This way, we can estimate how well a model will do on a new set of prompts using just a few evaluations, or how a new model will perform on a given prompt.
To test this idea, we train an IRT model on the prediction data from \citet{weber2023icl}, containing evaluations of eight LLaMA LLMs \citep[vanilla or instruction tuned on the Alpaca self-instruct dataset;][]{touvron2023llama,alpaca2023selfinstruct} for the ANLI dataset \citep{nie2020adversarial}.
The dataset consists of evaluations of the 750 data points wrapped with 15 different instruction templates sourced from the promptsource collection \citep[P3;][]{bach2022promptsource}.

Similarly to our previous experiments, we evaluate random splits and splits featuring distribution shifts (across model sizes and different instruction templates). For model size, we put all models with sizes 7B, 13B, and 30B in the training set while the models with size 65B go to the test set. For splits related to prompts templates, we consider two different approaches: first, we conduct a 2-fold cross-validation rotating instruction templates; second, we consider using the same and different instruction templates in the in-context-learning examples and in the input example alternating the strategies in the training and test sets. Results in Figure \ref{fig:icl_templates} suggest that prompt-based model evaluation can be efficiently carried out with the methods introduced in this work, even in the presence of several practical distribution shifts.

\paragraph{Adaptive testing} We expect further performance estimation improvements can be squeezed out by more sophisticated applications of similar ideas. For example, instead of pre-selecting a subset of examples before evaluating the LLM, it may be possible to select the examples \emph{adaptively} during the evaluation process. This idea is widely used in the computerized-assisted testing algorithms behind many standardized tests. We demonstrate preliminary results on MMLU using an adaptive IRT variant in Figure \ref{fig:adaptive_testing_mmlu} (see Figure \ref{fig:leaderboard_performance_acc_adaptive_sampling} for results on more benchmarks). Although the estimation performance has improved, our current implementation takes over 5 minutes to run, which might not be as appealing practically.

\begin{figure}[h]
\centering
\includegraphics[width=.85\linewidth]{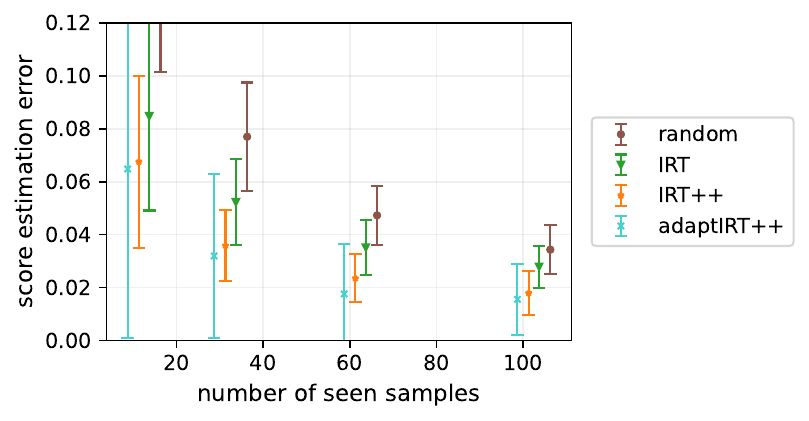}
\vspace{-0.5cm}
\caption{Preliminary adaptive testing results on MMLU.}
\label{fig:adaptive_testing_mmlu}
\end{figure}

\subsection{Limitations}
The main limitations of the methods described in this paper are related to potential severe distribution shifts. Taking MMLU as an example, we anticipate larger performance estimation errors for models that fail on simple questions while answering complicated ones correctly, thus altering the correctness patterns. This might be caused by significant architecture or pre-training data changes. A rapid increase in LLM capabilities may also cause extrapolation errors. To alleviate these problems, we recommend periodically updating the curated examples and IRT parameter estimates using data from more modern LLMs.


\section*{Acknowledgements}

We are grateful for the help provided by Yotam Perlitz in downloading data from HELM. This paper is based upon work supported by the National Science Foundation (NSF) under grants no.\ 2027737 and 2113373.

\section*{Impact Statement}
This paper presents work whose goal is to advance the field of Machine Learning. There are many potential societal consequences of our work, none which we feel must be specifically highlighted here.

\bibliography{FMP, MY}
\bibliographystyle{icml2024}

\newpage
\appendix
\onecolumn

\section{Evaluation when subscenarios have different number of samples}\label{sec:diff_number_samples}

Suppose we want to estimate the performance of a scenario $j$ which is composed of $s_j$ subscenarios. Denote the set of examples in each subscenario of $j$ as $\cI_{jk}$, for $k\in\{1,\cdots,s_j\}$. Then, $\cI_j=\cup_k\cI_{jk}$, with disjoint $\cI_{jk}$'s. For a given LLM $l$, our main goal is then to estimate $\frac{1}{s_j}\sum_{k}\frac{1}{|\cI_{jk}|}\sum_{i \in \cI_{jk}} Y_{il}$. See that we can write
\begin{talign*}
     \frac{1}{s_j}\sum_{k}\frac{1}{|\cI_{jk}|}\sum_{i \in \cI_{jk}} Y_{il}=\sum_{k}\sum_{i \in \cI_{jk}} \frac{1}{s_j|\cI_{jk}|}Y_{il}=\sum_{i \in \cI_{j}} \bar{\omega}_i Y_{il}.
\end{talign*}
This tells us that we can represent the performance of model $l$ as a weighted average instead of a simple average. In our code, $\omega_i\triangleq|\cI_j|\cdot\bar{\omega}_i$'s are called \texttt{balance\_weights} and $\bar{\omega}_i$'s are called \texttt{normalized\_balance\_weights}. In Section \ref{sec:strategies}, when computing the estimates using the stratified random sampling strategy, the weights for each example are still given by $1/|\hat{\cI}_j|$ (because subscenarios should already be equally represented) but when using the clustering ideas, the weight for each anchor point is given by the sum of $\bar{\omega}_i$'s of all items in its cluster. We do not apply any weighting when fitting the IRT models but only when computing the p-IRT (and gp-IRT) estimate:
\begin{talign*}
    \hat{Z}^{\text{p-IRT}}_{jl} =\frac{\hat{\lambda}}{|\widehat{\cI}_j|}\sum_{i\in \widehat{\cI}_j} \omega_iY_{i l}+\frac{1-\hat{\lambda}}{|{\cI}_j\setminus\widehat{\cI}_j|}\sum_{i \in {\cI}_j\setminus\widehat{\cI}_j  } \omega_i\hat{p}_{il}\nonumber.
\end{talign*}

\section{tinyMMLU}\label{sec:mmlu}

To construct tinyMMLU we chose 100 examples and weights identified by the IRT anchor point approach (``IRT'') corresponding to the best test performance (across random seeds) in the experiment presented in the top part of Figure \ref{fig:leaderboard_performance_acc} on MMLU. For comparison, we analogously selected 100 examples with the correctness anchor point method.

To better understand the composition of tinyMMLU, in Figure \ref{fig:w_mini_mmlu} we visualize the distribution of the weights of the selected examples and compare it to the weights of the correctness anchors. Recall that weights are non-negative and sum to 1. If an item has a weight $0.1$, for example, that item has a contribution of $10\%$ in the final estimated score.  From Figure \ref{fig:w_mini_mmlu}, we can see that tinyMMLU has more uniform weights compared to its correctness-based counterpart. We measure uniformity through the effective sample size (ESS) of the example weights. ESS, traditionally used in the Monte Carlo and domain adaptation \cite{elvira2022rethinking,maia2023effective} literature, measures weight inequality in a way such that $\text{ESS}=0.50$, for example, informally means that the corresponding weighted average is influenced by only $50\%$ of (uniformly weighted) examples. In the context of our problem, more uniform weights of tinyMMLU contribute to its robustness when evaluating LLMs with varying correctness patterns, such as specialized LLMs in Figure \ref{fig:mmlu_performance_specialized_models_acc}.


We also investigate the total weight of the tinyMMLU examples within each of the 57 subjects in Figure \ref{fig:w_sub_mini_mmlu}. The highest weighted are ``high school psychology'', ``elementary mathematics'', and ``professional law''. Interestingly the weight of the subjects is fairly different from its correctness-based counterpart.

\begin{figure}[h]
\centering
\includegraphics[width=.35\textwidth]{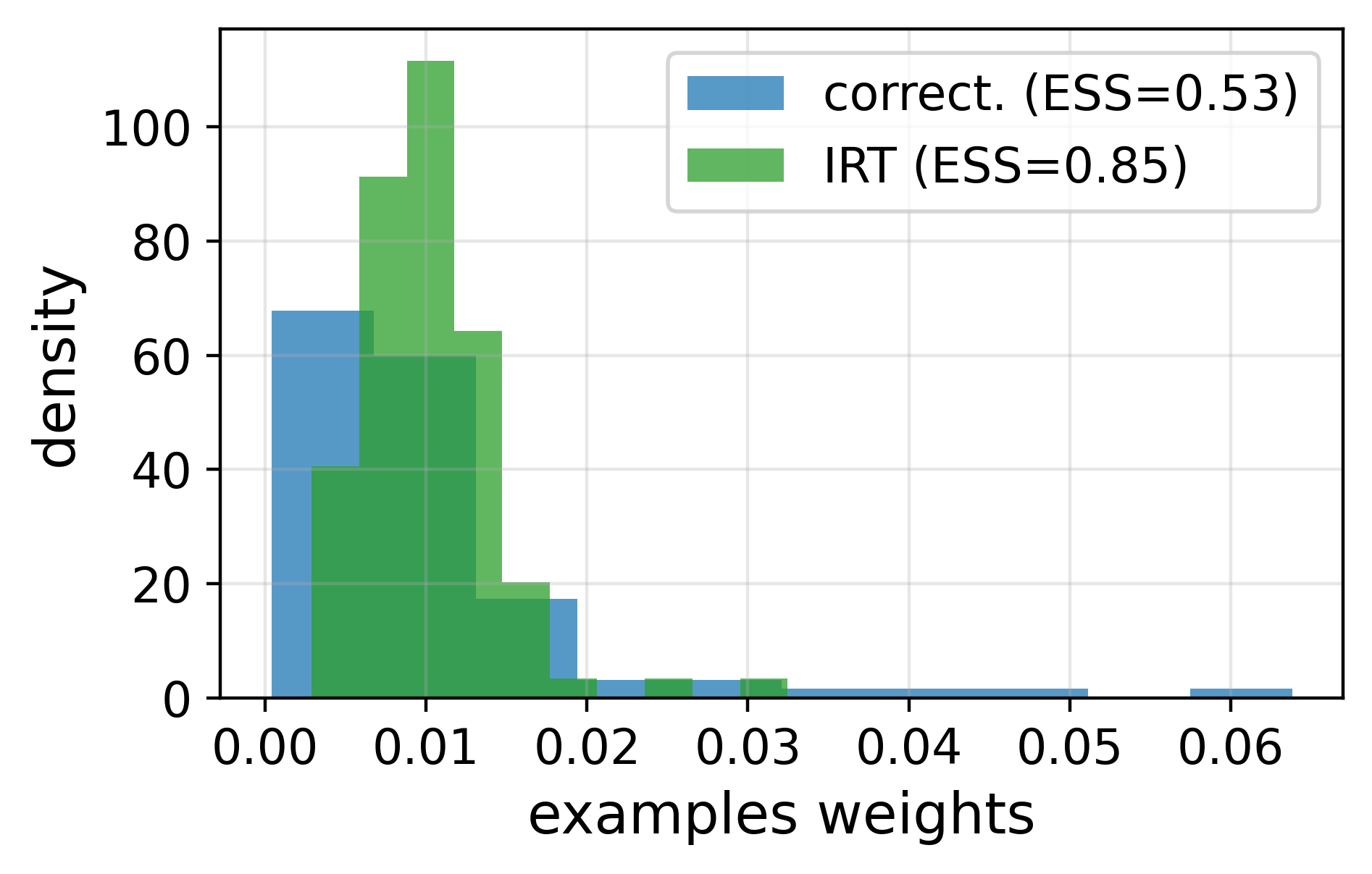}
\caption{Comparing the spread of examples weights using both the IRT and correctness approaches to find anchor points. We see that weights inequality is much higher when we cluster examples using correctness.}
\label{fig:w_mini_mmlu}
\end{figure}

\begin{figure*}[h]
\centering
\includegraphics[width=.75\textwidth]{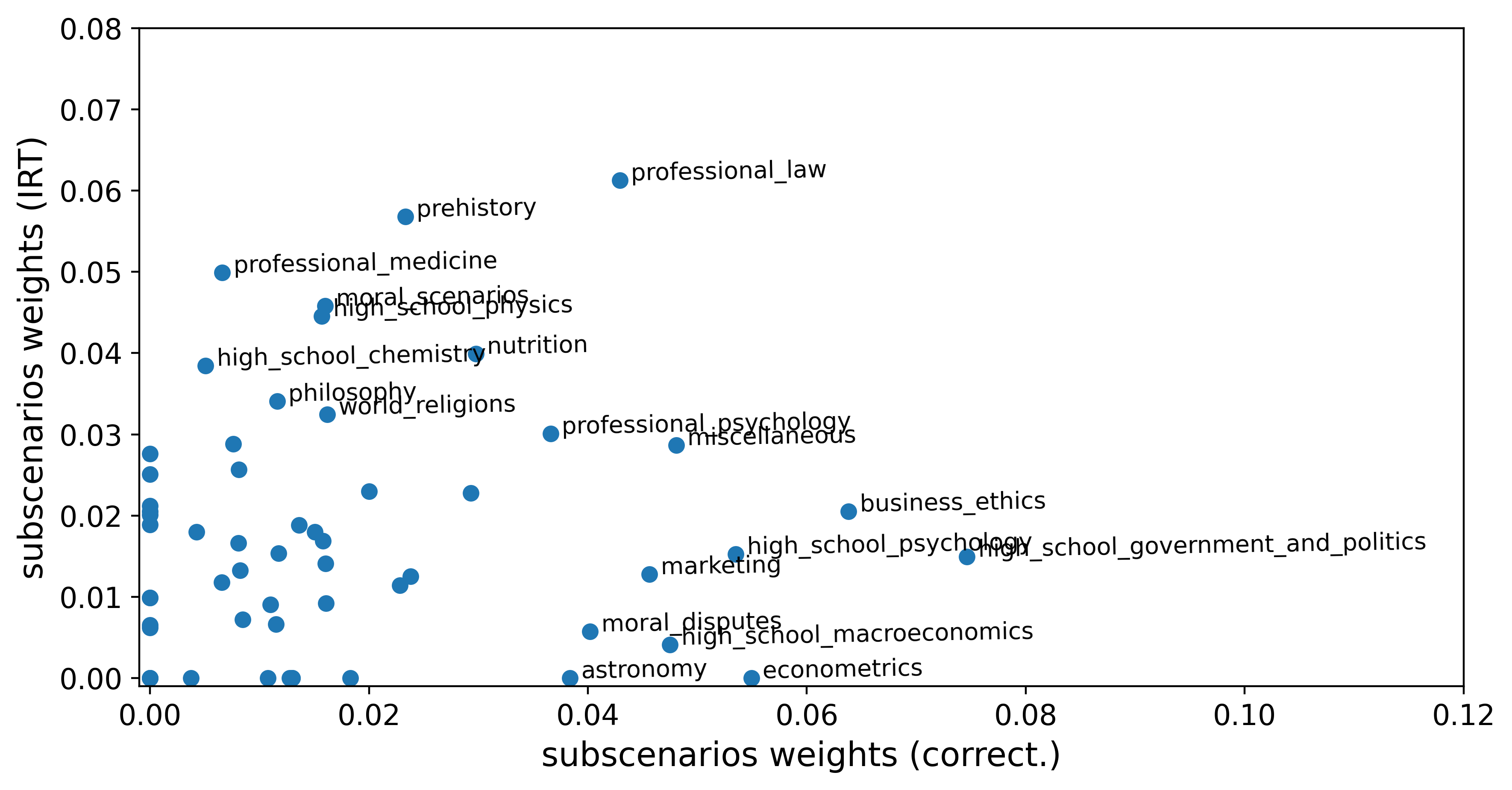}
\caption{Weights given to MMLU subscenarios by the two anchoring methods.}
\label{fig:w_sub_mini_mmlu}
\end{figure*}
\section{Proof of Proposition \ref{prop:pirt-consistency}}
\begin{proof}[Proof of proposition \ref{prop:pirt-consistency}]
See that
    \begin{talign*}
        |\widehat{\Ex}[Z_{jl} \mid Y_{i_0 l}, \cdots, Y_{i_k l}]-\Ex[Z_{jl} \mid Y_{i_0 l}, \cdots, Y_{i_k l}]|&\leq \frac{1-\hat{\lambda}}{|{\cI}_j\setminus\widehat{\cI}_j|}\sum_{i \in {\cI}_j\setminus\widehat{\cI}_j  }| \sigma(\hat{\theta}_l^\top \alpha_i-\beta_i)-\sigma(\theta_l^\top \alpha_i-\beta_i)|\\
        &\leq \frac{1}{|{\cI}_j\setminus\widehat{\cI}_j|}\sum_{i \in {\cI}_j\setminus\widehat{\cI}_j  }|(\hat{\theta}_l-\theta_l)^\top \alpha_i|\\
        &\leq \frac{1}{|{\cI}_j\setminus\widehat{\cI}_j|}\sum_{i \in {\cI}_j\setminus\widehat{\cI}_j  }\norm{\alpha_i}_2\norm{\hat{\theta}_l-\theta_l}_2 \\
        &\leq c \norm{\hat{\theta}_l-\theta_l}_2 \to 0
    \end{talign*}
    in probability as $|\widehat{\cI}|\to\infty$. The second step uses the fact that $\sigma$ is 1/4-Lipschitz and the third step applies Cauchy-Schwarz inequality.
\end{proof}

\section{More details about benchmarks}\label{sec:benchmark}

\begin{itemize}
    \item HuggingFace's Open LLM Leaderboard \citep{open-llm-leaderboard}: the data from this benchmark is composed of 395 LLMs and approx. 29k items that were downloaded from the platform in January/2024. To extract data from those models, we filter all models from the platform that have an MMLU score over\footnote{On the leaderboard. The actual score we use can be different because we use the last submission to the leaderboard, while the leaderboard shows the best results among all submissions.} $.3$, order them according to their average performance, and equally spaced selected models. Then, we kept all models that had scores for all six scenarios: ARC \citep{clark2018think}, HellaSwag \citep{zellers2019hellaswag}, MMLU \citep{hendrycks2020measuring}, TruthfulQA \citep{lin2021truthfulqa}, Winogrande \citep{sakaguchi2021winogrande}, and GSM8K \citep{cobbe2021training}. In a second round of data collection, we collected data for 40 ``specialized models" by recognizing which models were fine-tuned to do the math, coding, \etc. The two sets of models have an intersection, and in total, we have collected data from 428 LLMs.

    \item HELM \citep{liang2022holistic}: we use HELM Lite (\url{https://crfm.stanford.edu/helm/lite}) v1.0.0, which is a dataset composed of 37 LLMs and approx. 10k evaluation examples from 10 scenarios. The scenarios are OpenbookQA \citep{mihaylov2018can}, MMLU \citep{hendrycks2020measuring}, NarrativeQA \citep{kovcisky2018narrativeqa}, NaturalQuestions (closed-book) \citep{kwiatkowski2019natural}, NaturalQuestions (open-book), Math \citep{hendrycks2021measuring}, GSM8K \citep{cobbe2021training}, LegalBench \citep{guha2024legalbench}, MedQA \citep{jin2021disease}, WMT14 \citep{bojar2014findings}.
\end{itemize}

\section{Extra results}

\subsection{Robustness in predicting performance in a longer time horizon}

We conduct extra ablation studies placing 75\% of the data in the test set. For the Open LLM Leaderboard and MMLU, it means we are using 3 months of future data as the test set (vs. approx. 3 weeks in the main text) while for AlpacaEval 2.0 that would correspond to 6 months (vs. approx. 2 months in the main text). In general, we show that our main method ``IRT++'' is pretty robust to the advancements in the field when predicting the performance of new LLMs. We report in the following plots the average estimation error in the test set (using 75\% of the most recent data in the test set) and standard deviation across LLMs. The results do not differ considerably from the ones in the main text.

\begin{figure}[H]
\centering
\includegraphics[width=1\textwidth]{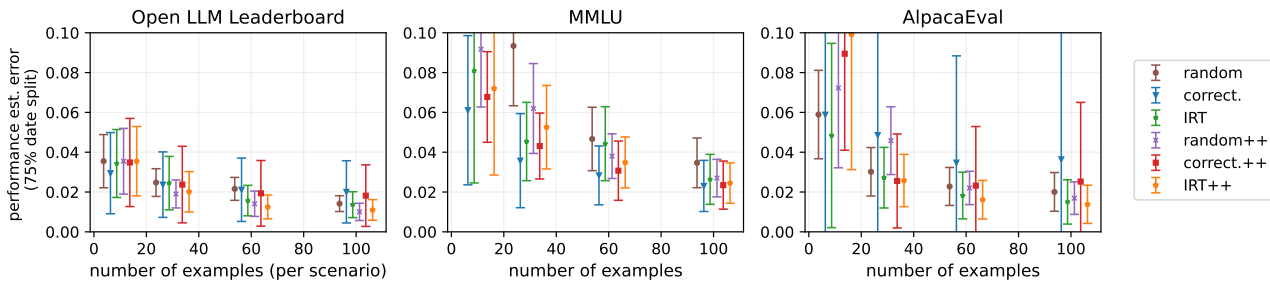}
\caption{Our methods are robust in predicting performance in a longer time horizon}
\end{figure}

\subsection{How costly is it for stratified random sampling beat IRT++ with larger samples?}

We present results comparing IRT++ and stratified random sampling for a larger number of evaluation examples $n$. On Open LLM Leaderboard 400 examples per task (2400 total) are enough to match IRT++ with 100 examples per task (600 total). On MMLU, random sampling improves quite slowly and would require $>$400 examples to match IRT++ at 100. On AlpacaEval, random with 200 examples matches IRT++ with 100 examples (note that AlpacaEval is a small benchmark with 805 examples total, but evaluation requires GPT-4 and is thus quite expensive). We use the random split for the LLMs, implying no distribution shift between train and test.


\begin{figure}[H]
\centering
\includegraphics[width=1\textwidth]{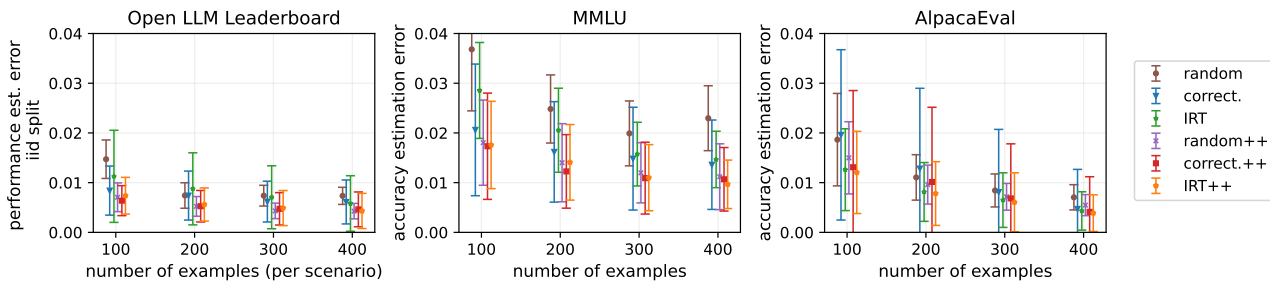}
\caption{Benchmark results for different methods and sample sizes}
\end{figure}

\subsection{Running time}
We record the running time of IRT inference (ability parameter fitting) when running our experiments. In Figure \ref{fig:running_item_inference} we show that the average running time is fairly negligible.

\begin{figure}[H]
\centering
\includegraphics[width=.35\textwidth]{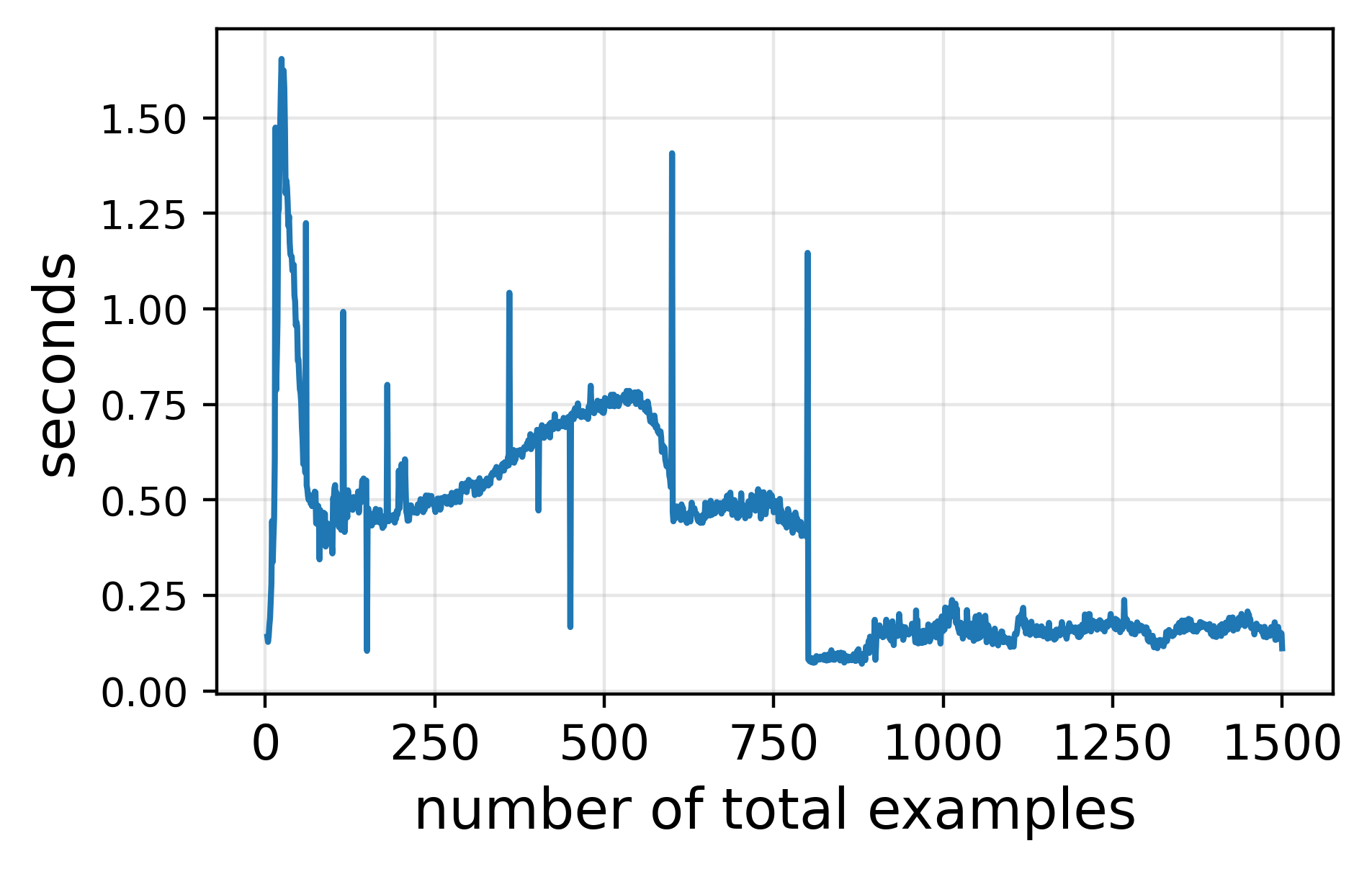}
\caption{Average running time by the amount of test examples: IRT inference.}
\label{fig:running_item_inference}
\end{figure}

\subsection{Rank correlation results}
In this section, we explore versions of Figures \ref{fig:leaderboard_performance_acc} and \ref{fig:mmlu_performance_specialized_models_acc}  when we look at rank correlation (correlation between true and predicted ranking) instead of performance. It is clear from the plots below that our method can be used to rank models efficiently with tiny samples.

\begin{figure}[H]
\centering
\includegraphics[width=.9\textwidth]{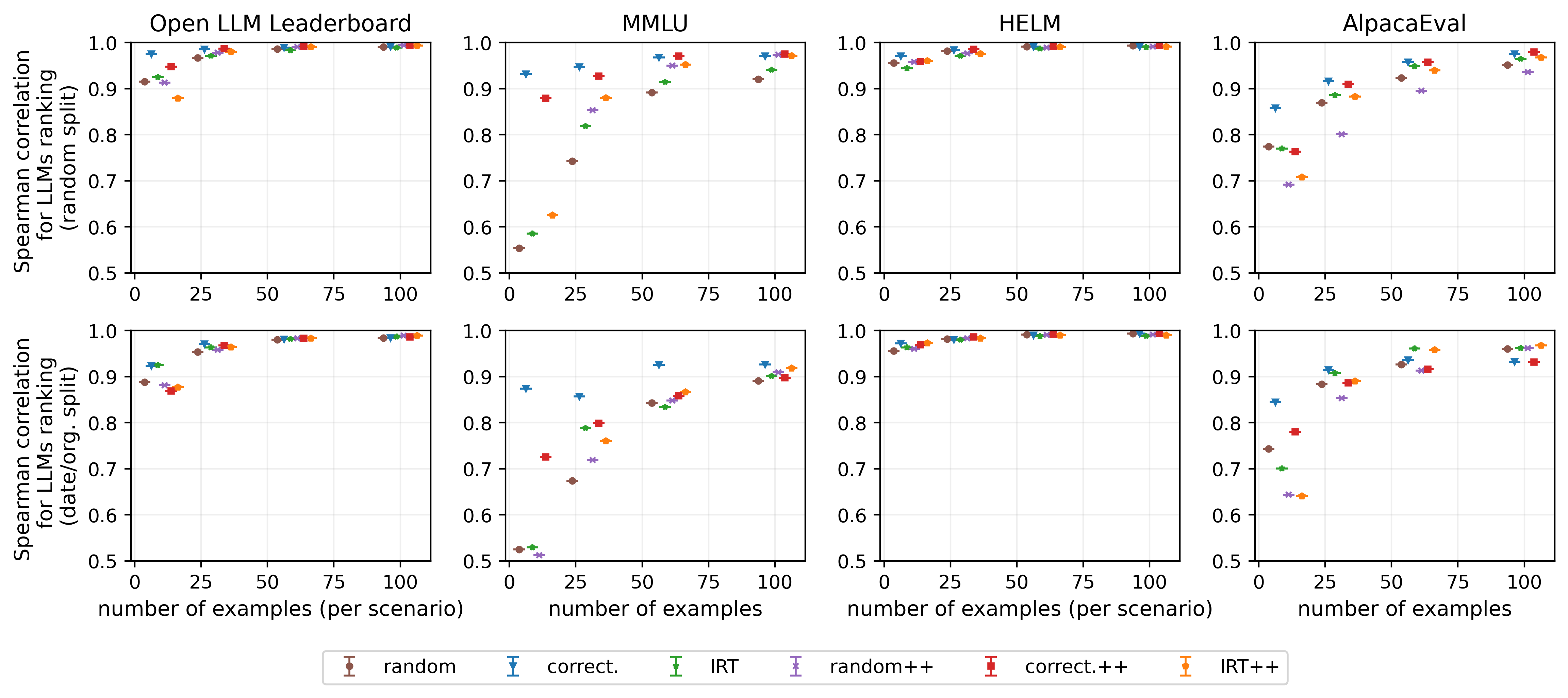}
\caption{Rank correlation for true performance and predicted performance among LLMs.}
\label{fig:leaderboard_performance_rank}
\end{figure}

\begin{figure}[H]
\centering
\includegraphics[width=.55\textwidth]{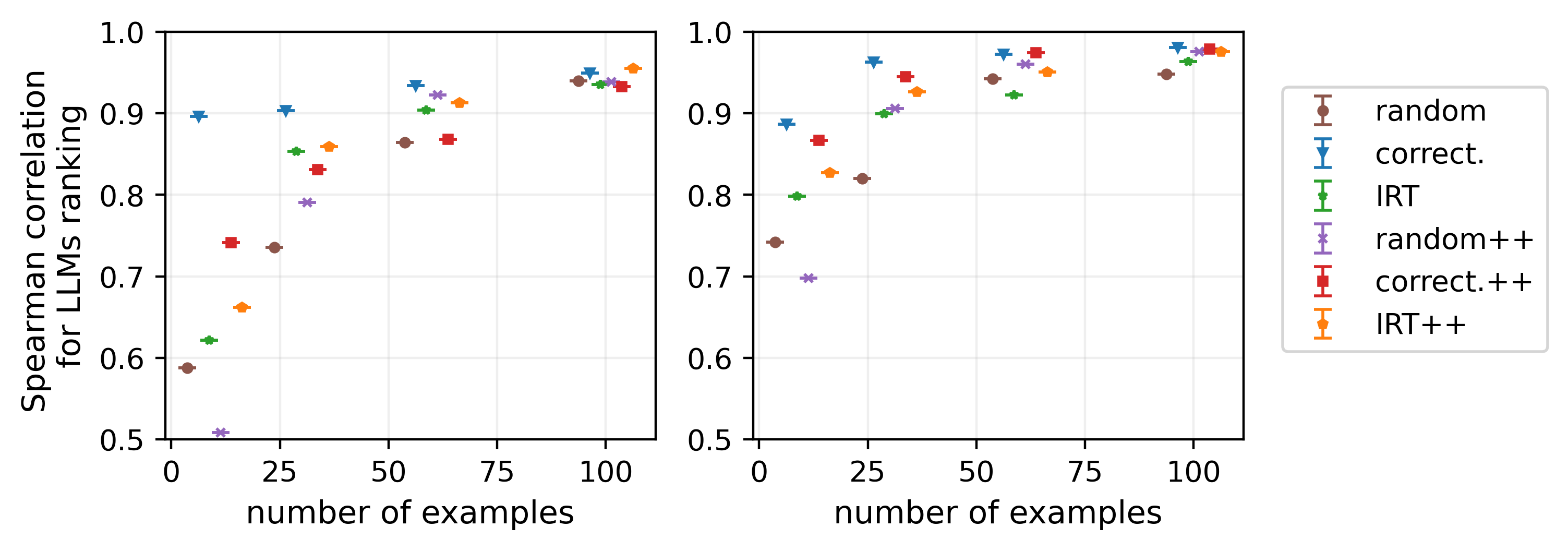}
\caption{Rank correlation for true performance and predicted performance among LLMs in MMLU. The plot on the left represents a random split of the data while the plot on the right considers specialized models as the test set.}
\label{fig:mmlu_performance_specialized_models_rank}
\end{figure}

\subsection{Adaptive testing}
\label{app:adaptive_testing}
In this section, we complement the results shown in Figure \ref{fig:adaptive_testing_mmlu} for all benchmarks.

\begin{figure}[H]
\centering
\includegraphics[width=.9\textwidth]{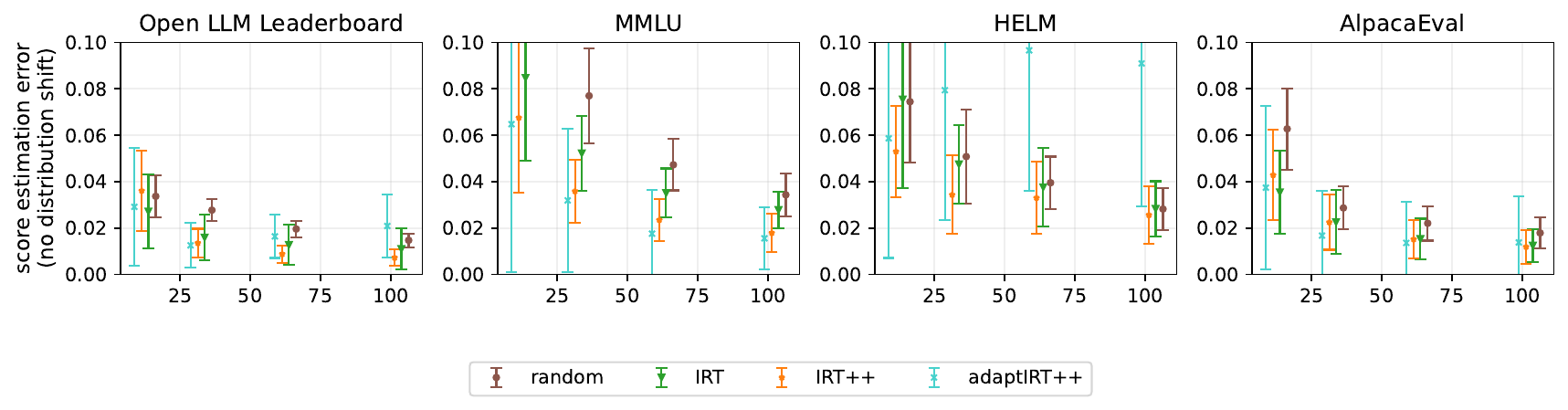}
\caption{Results of adaptive testing for different benchmarks.}
\label{fig:leaderboard_performance_acc_adaptive_sampling}
\end{figure}


\section{Individual performances per scenario}
In this section, we explore what is behind Figure \ref{fig:leaderboard_performance_acc} by looking in detail at results for individual scenarios for the Open LLM Leaderboard and HELM. It is clear from the following plots that there are scenarios in which our methods shine more than others.

\subsection{Open LLM Leaderboard}
\begin{figure}[H]
\centering
\includegraphics[width=.75\textwidth]{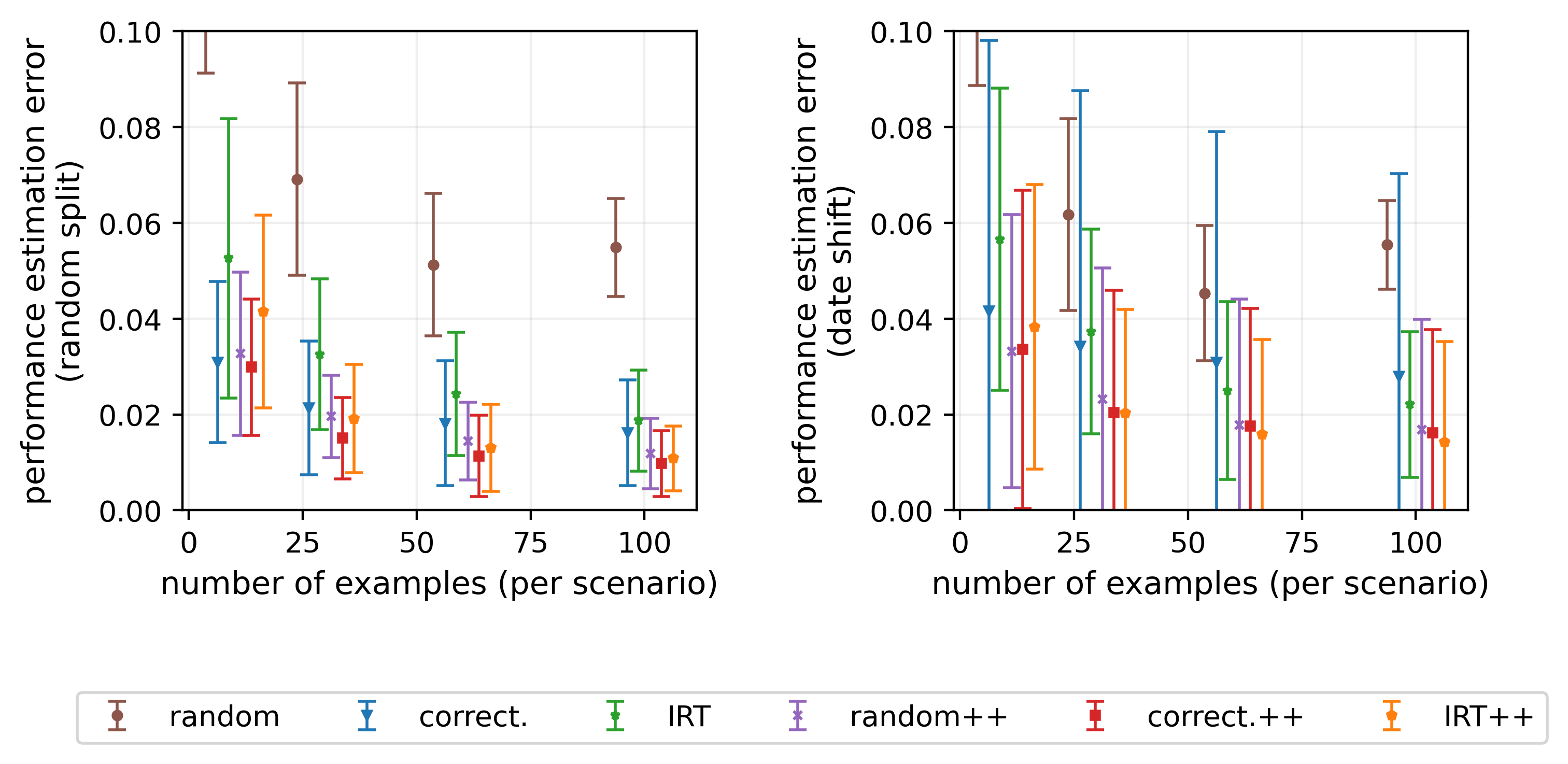}
\caption{ARC}
\end{figure}

\begin{figure}[H]
\centering
\includegraphics[width=.75\textwidth]{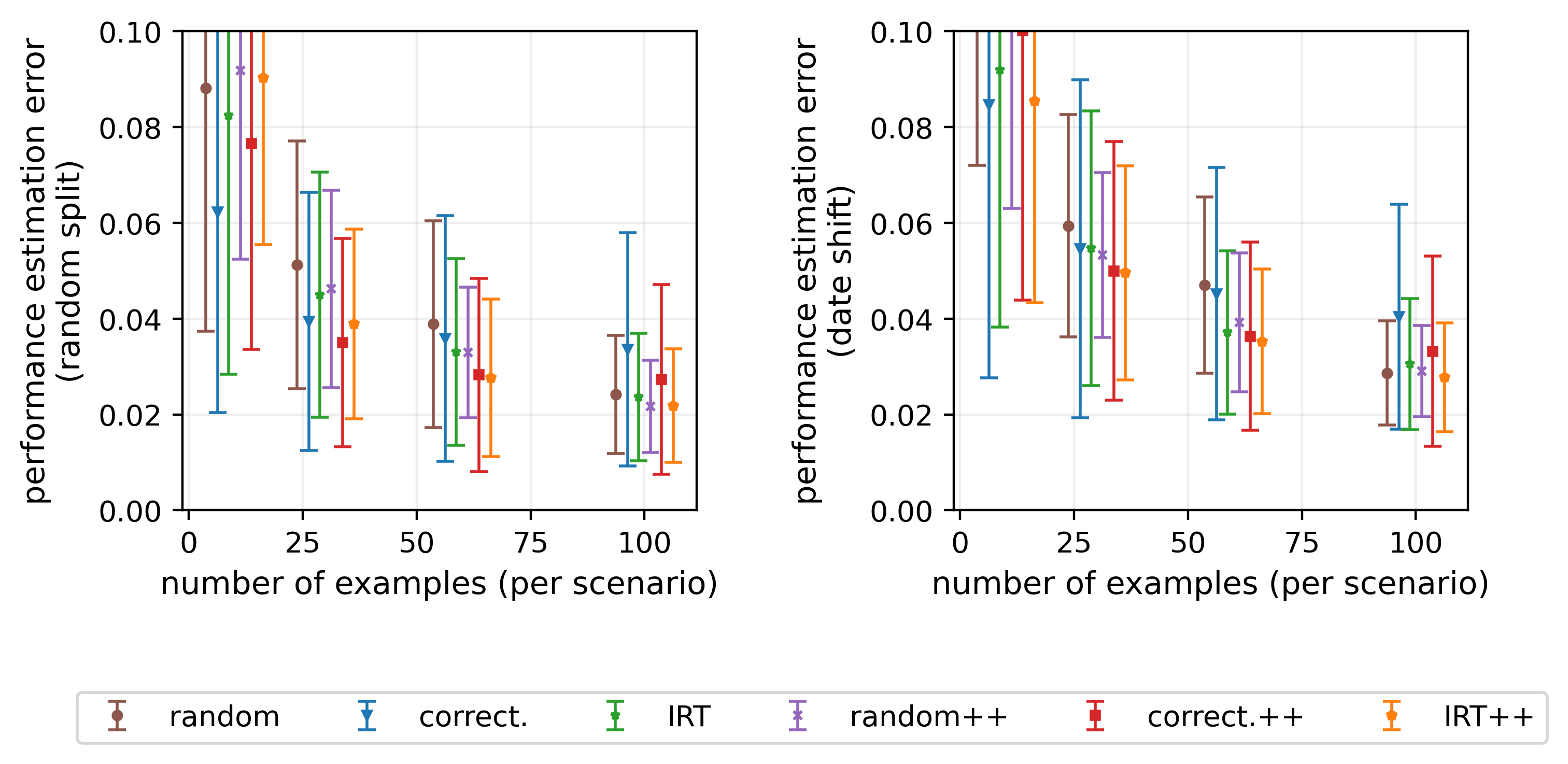}
\caption{GSM8K}
\end{figure}

\begin{figure}[H]
\centering
\includegraphics[width=.75\textwidth]{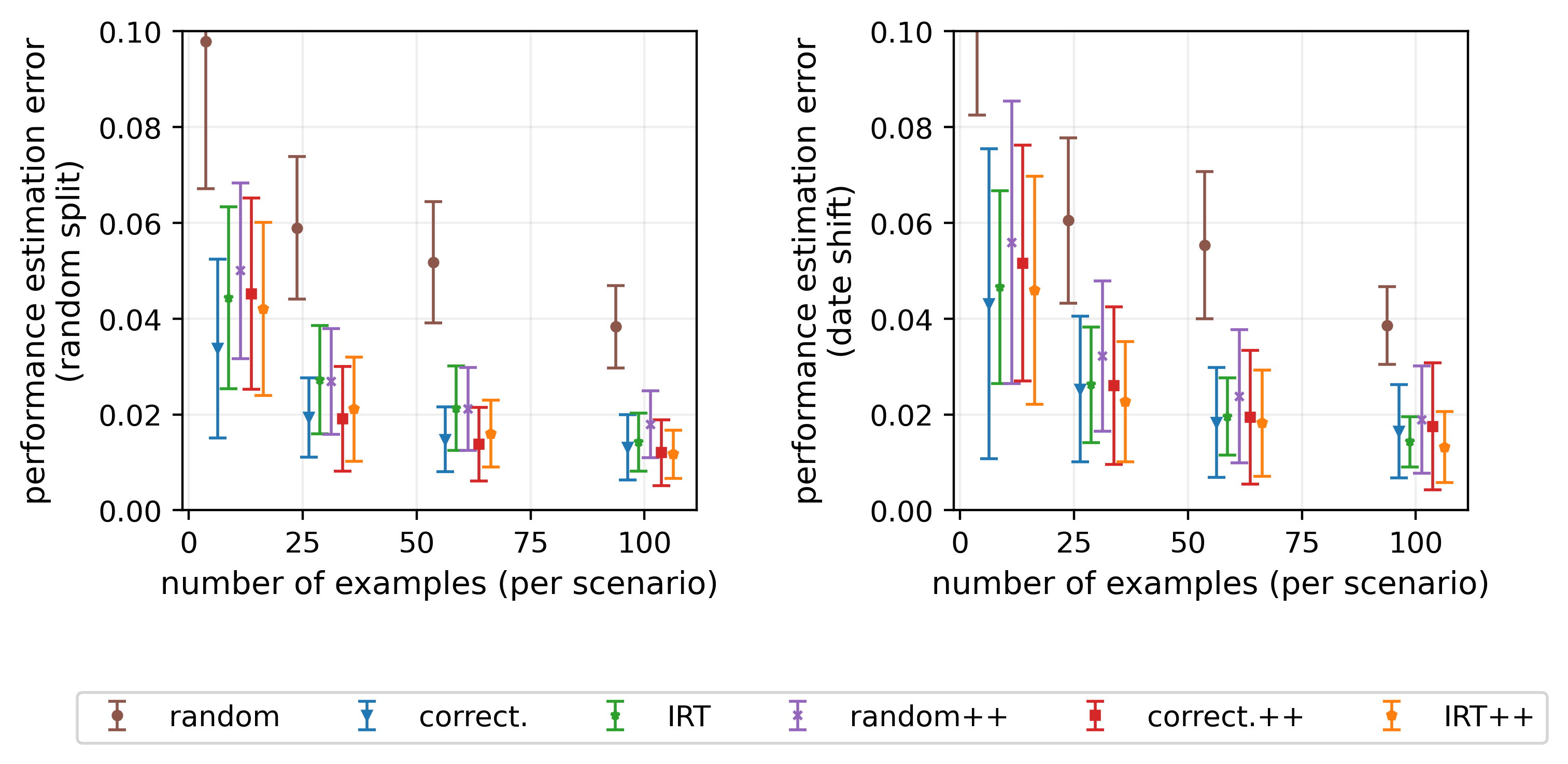}
\caption{TruthfulQA}
\end{figure}

\begin{figure}[H]
\centering
\includegraphics[width=.75\textwidth]{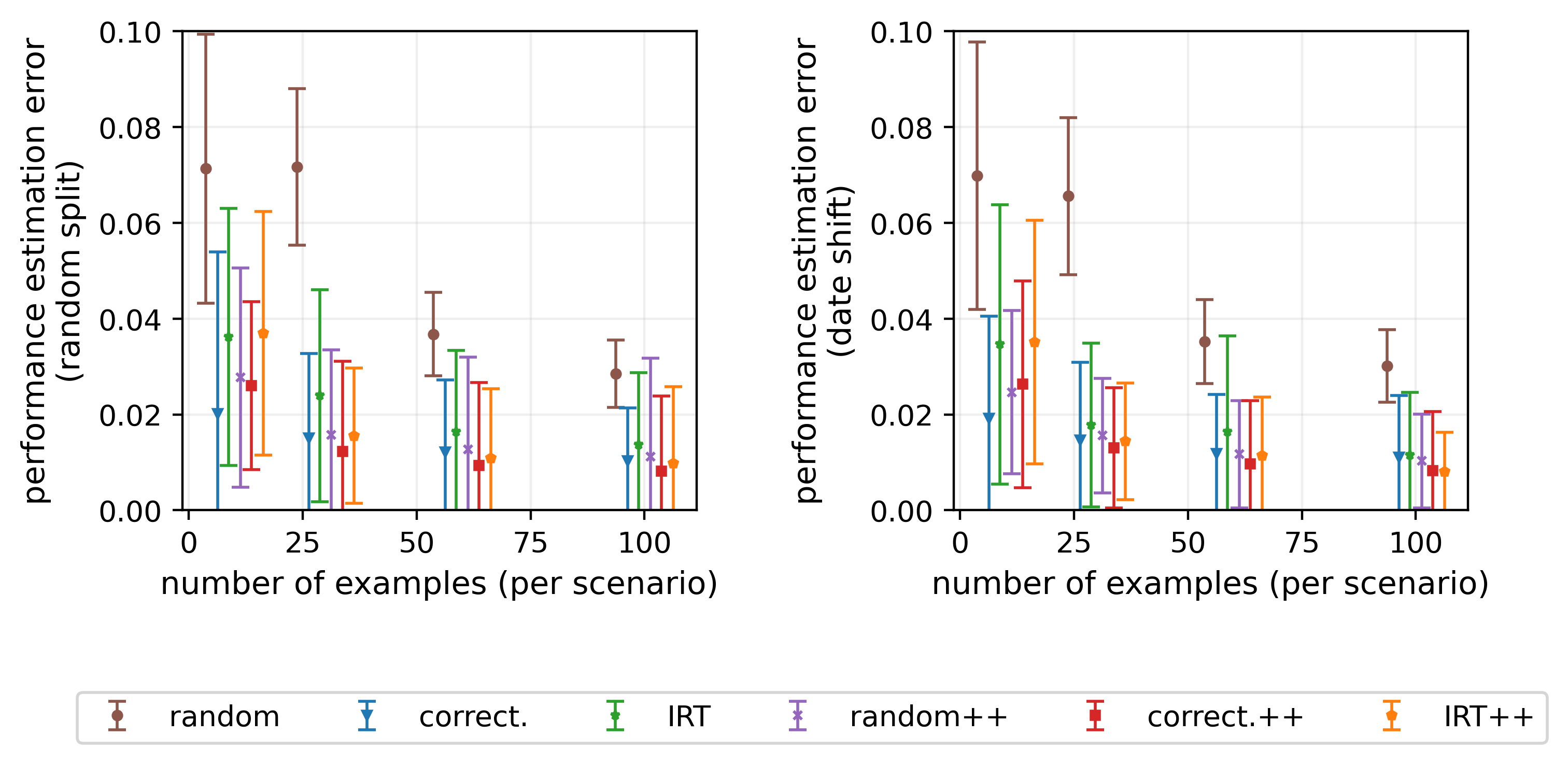}
\caption{HellaSwag}
\end{figure}

\begin{figure}[H]
\centering
\includegraphics[width=.75\textwidth]{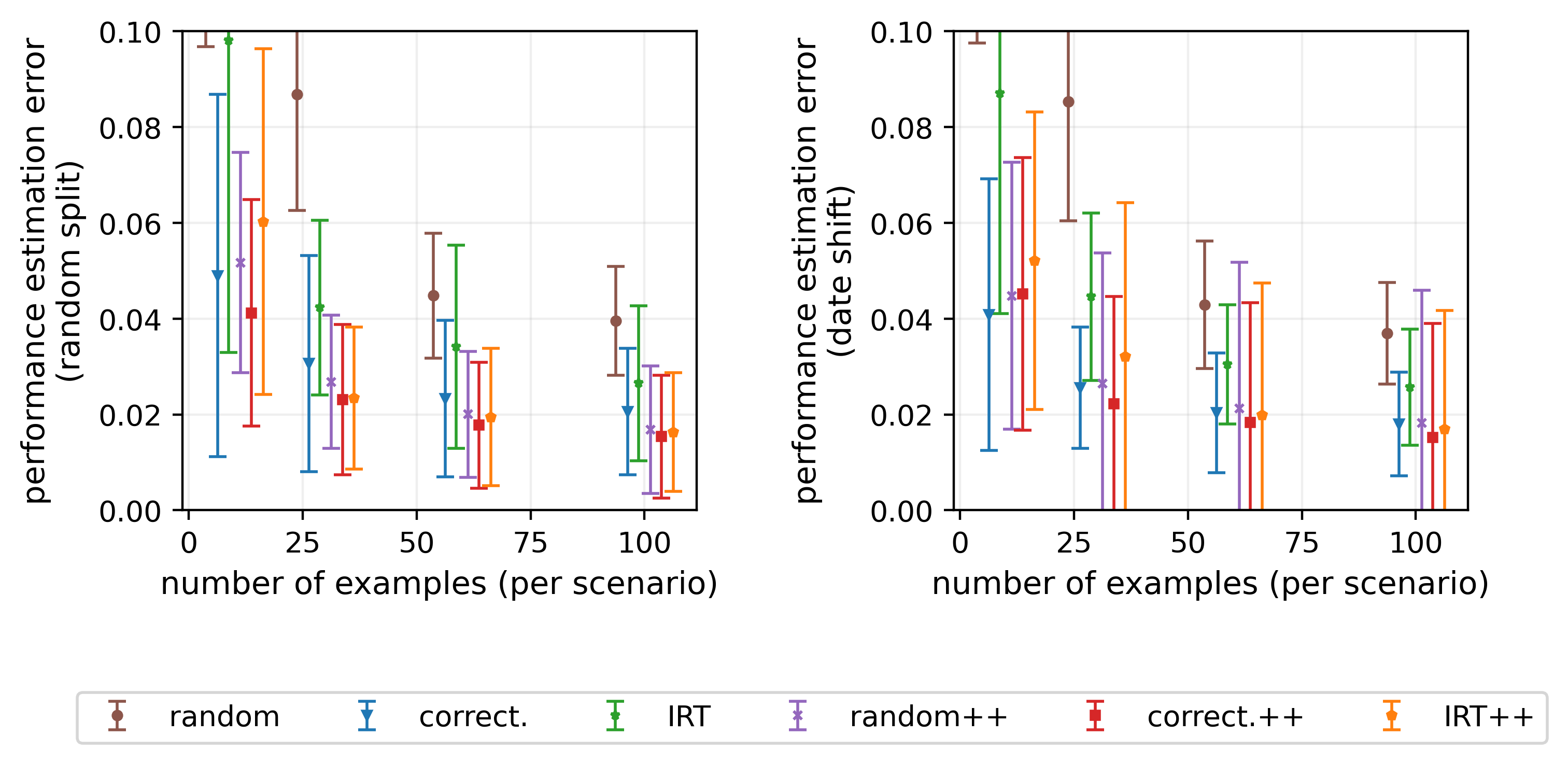}
\caption{MMLU}
\end{figure}

\begin{figure}[H]
\centering
\includegraphics[width=.75\textwidth]{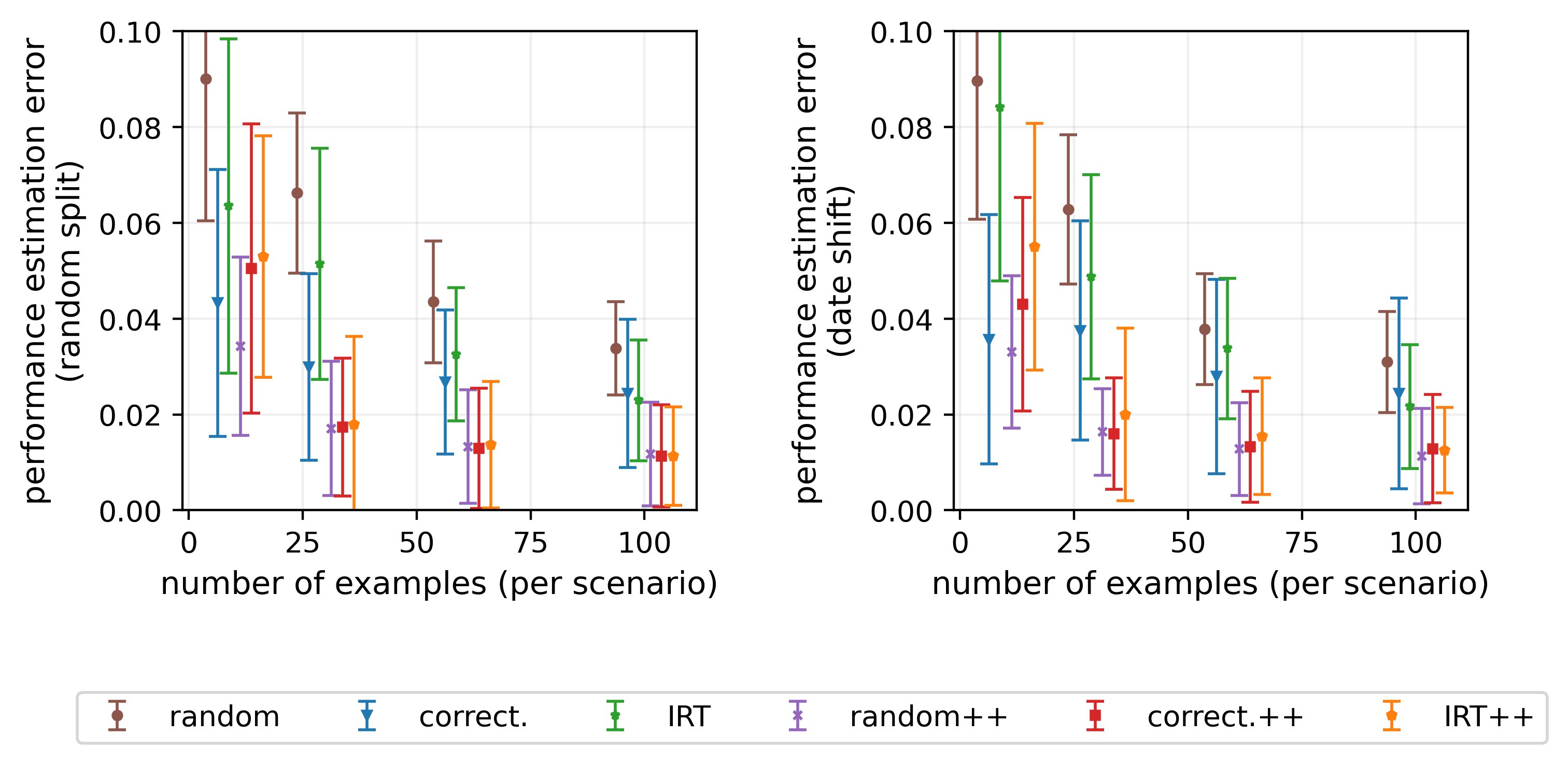}
\caption{Winogrande}
\end{figure}

\subsection{HELM}
\begin{figure}[H]
\centering
\includegraphics[width=.75\textwidth]{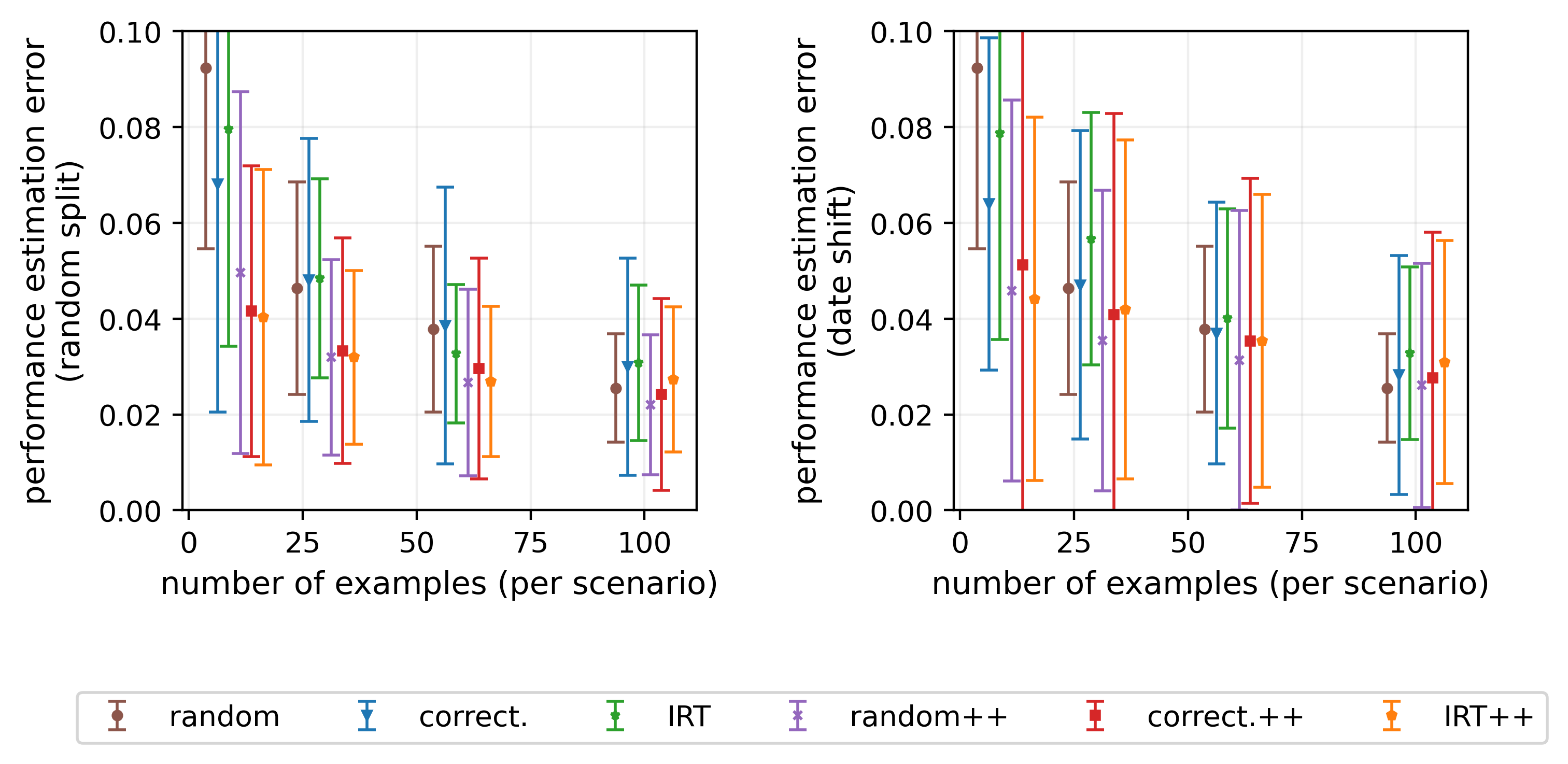}
\caption{OpenbookQA}
\end{figure}

\begin{figure}[H]
\centering
\includegraphics[width=.75\textwidth]{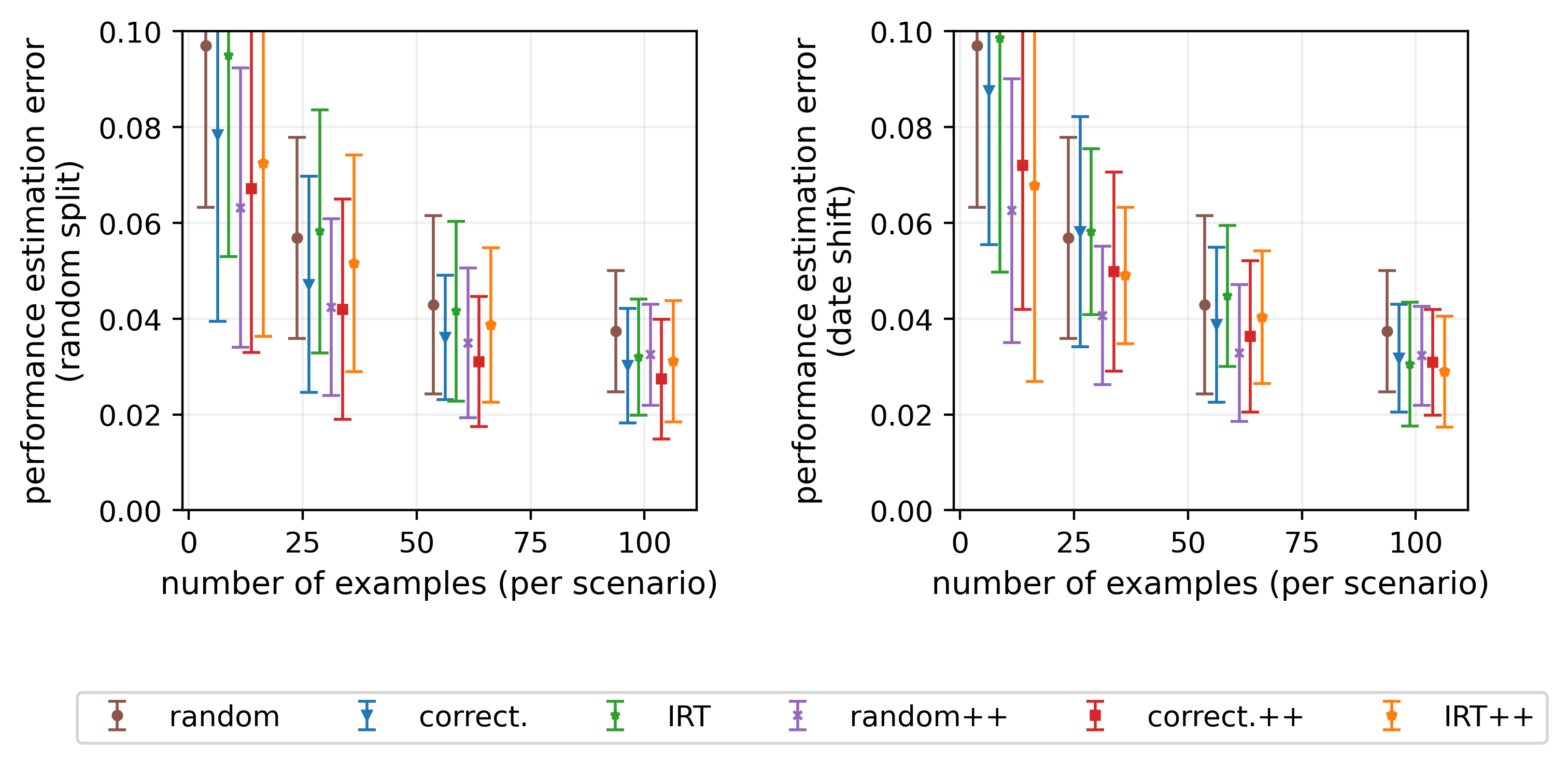}
\caption{GSM}
\end{figure}

\begin{figure}[H]
\centering
\includegraphics[width=.75\textwidth]{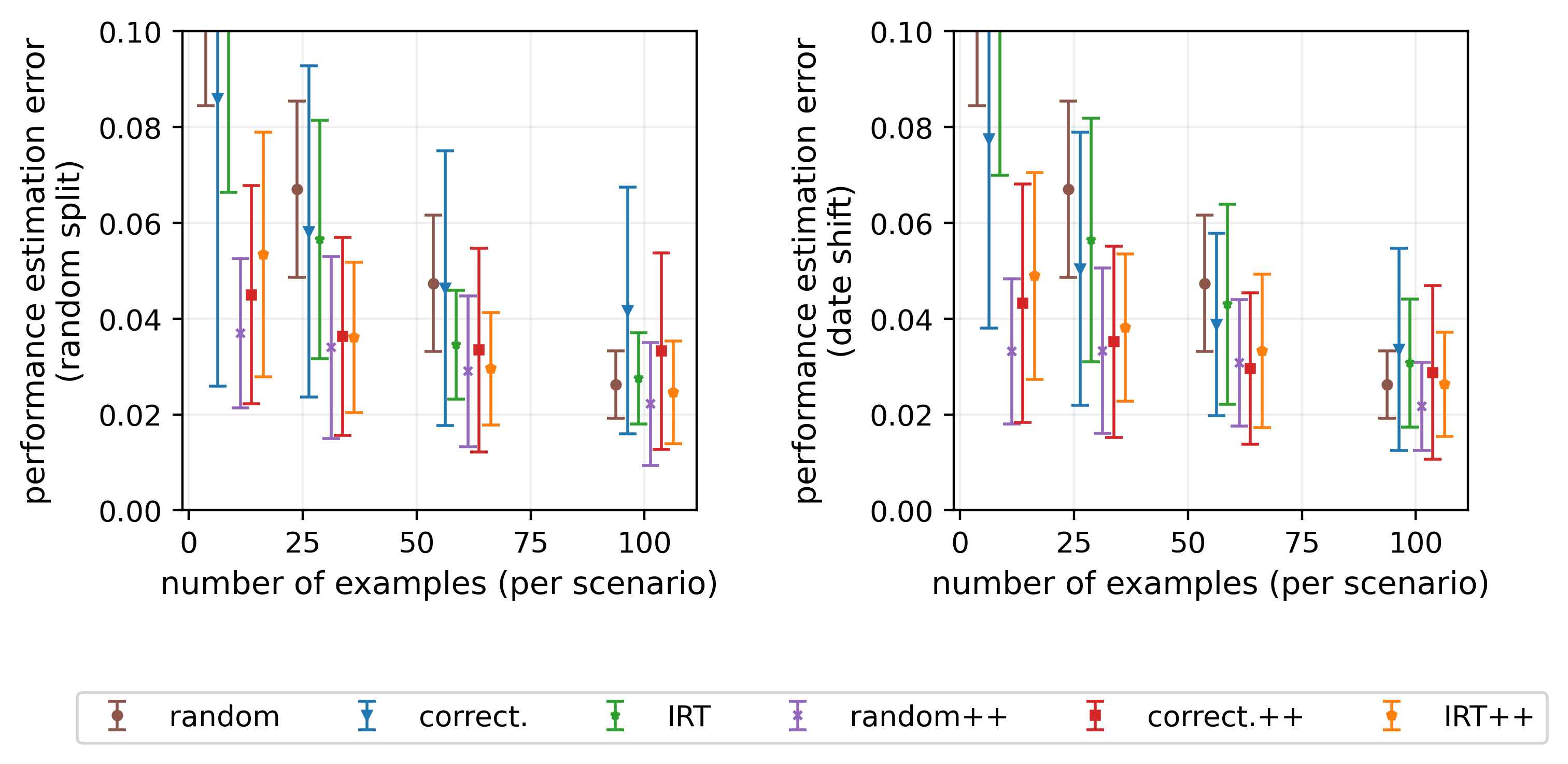}
\caption{LegalBench}
\end{figure}

\begin{figure}[H]
\centering
\includegraphics[width=.75\textwidth]{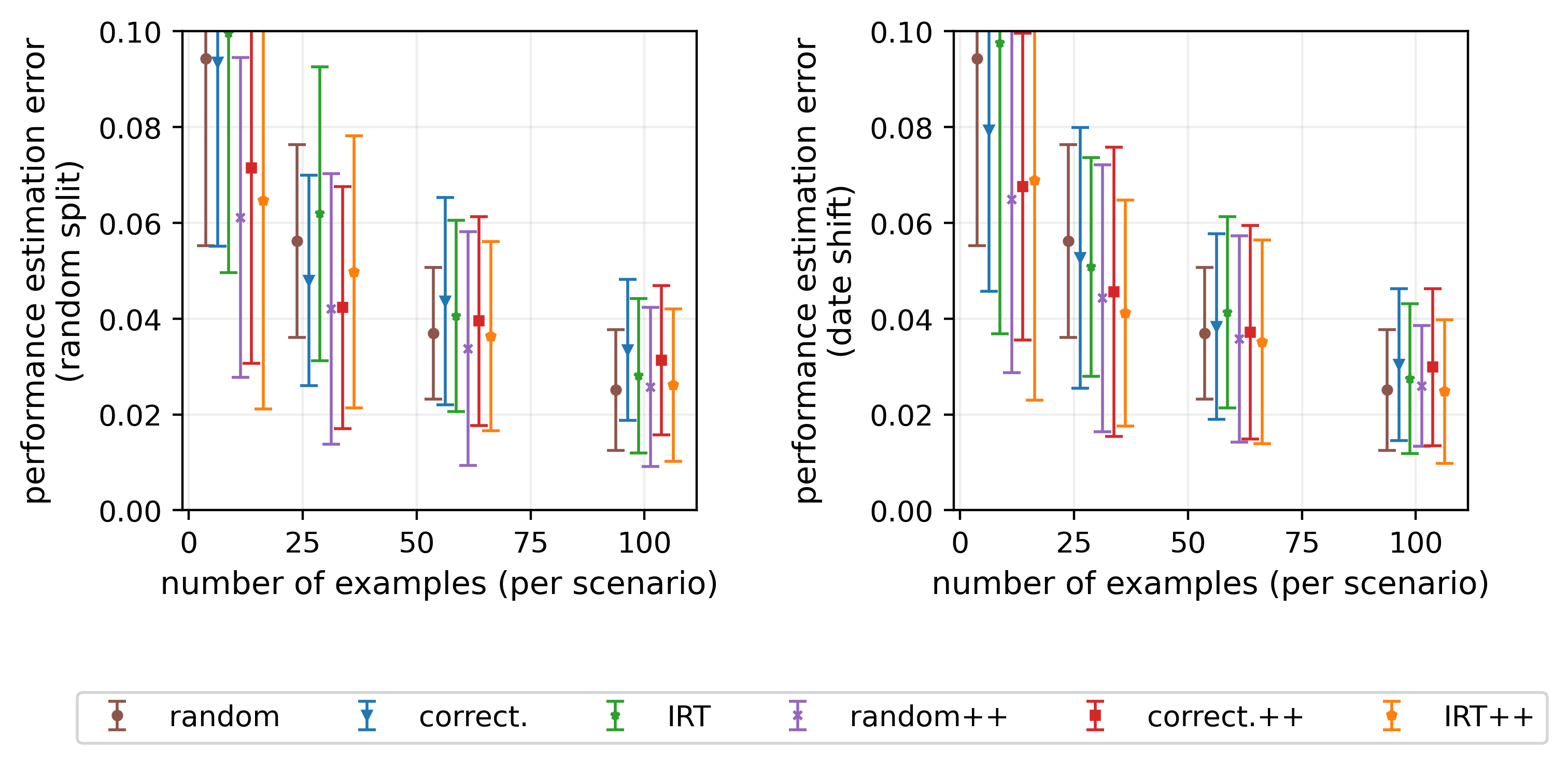}
\caption{Math}
\end{figure}

\begin{figure}[H]
\centering
\includegraphics[width=.75\textwidth]{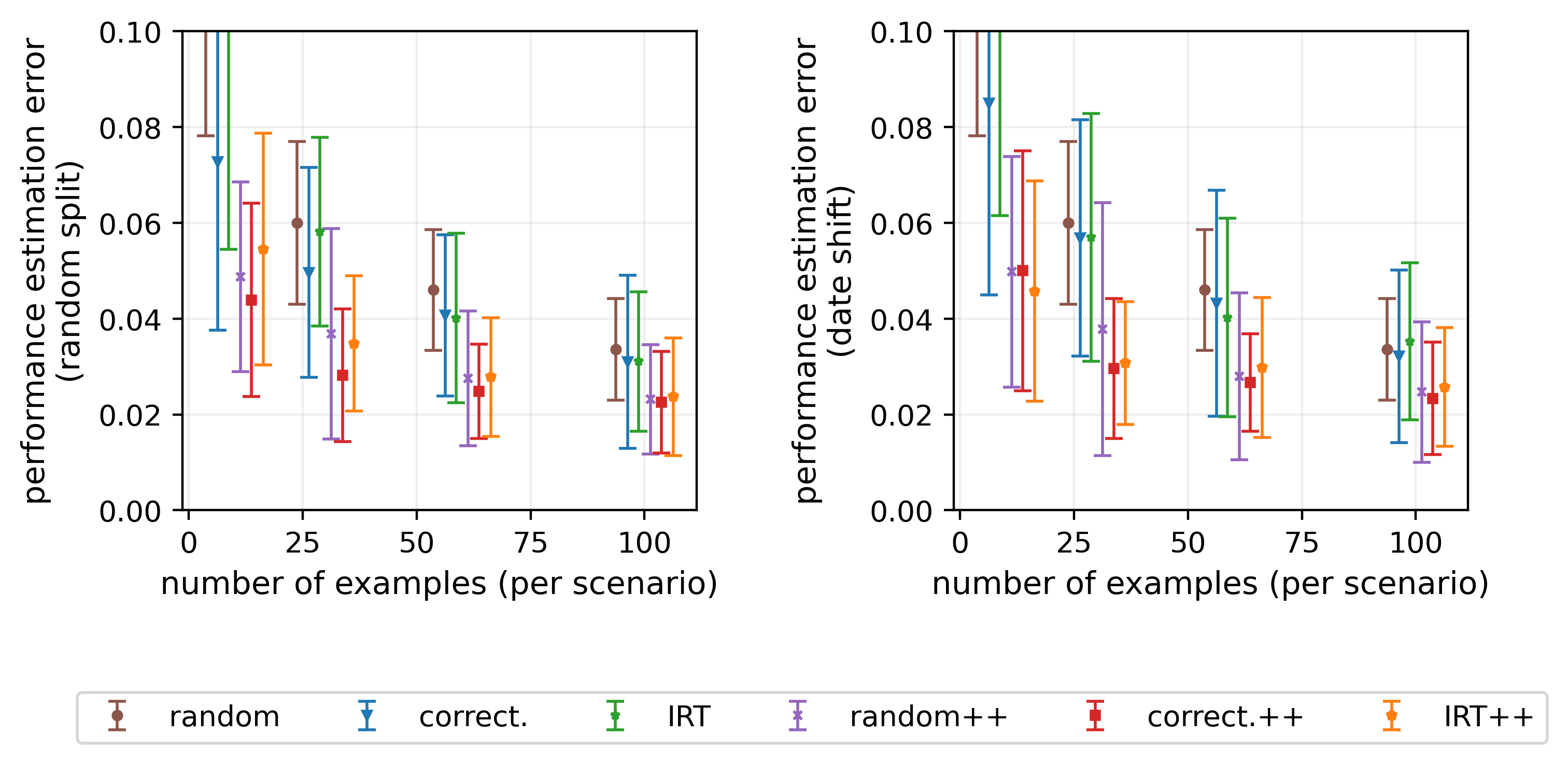}
\caption{MedQA}
\end{figure}

\begin{figure}[H]
\centering
\includegraphics[width=.75\textwidth]{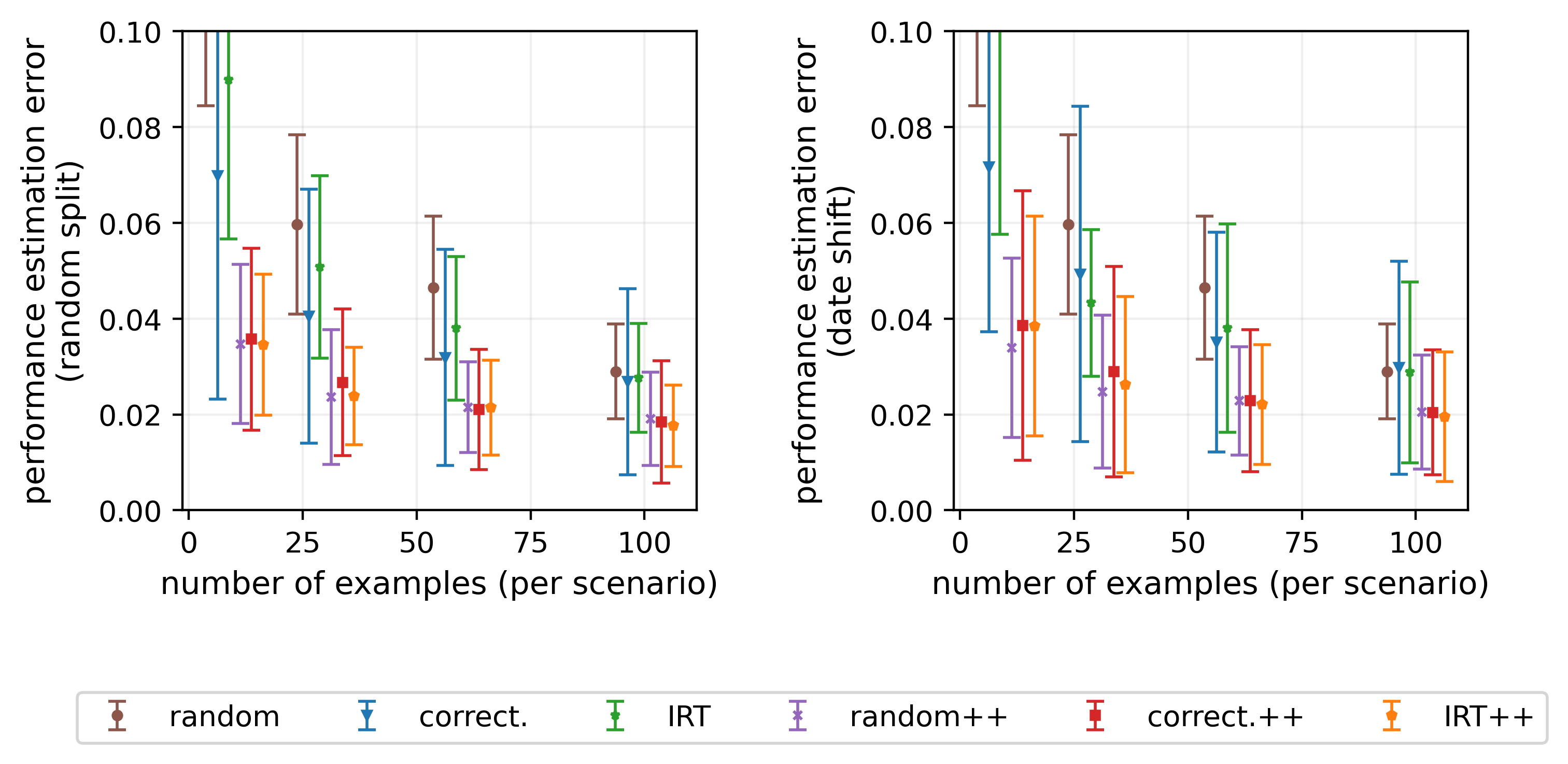}
\caption{MMLU}
\end{figure}

\begin{figure}[H]
\centering
\includegraphics[width=.75\textwidth]{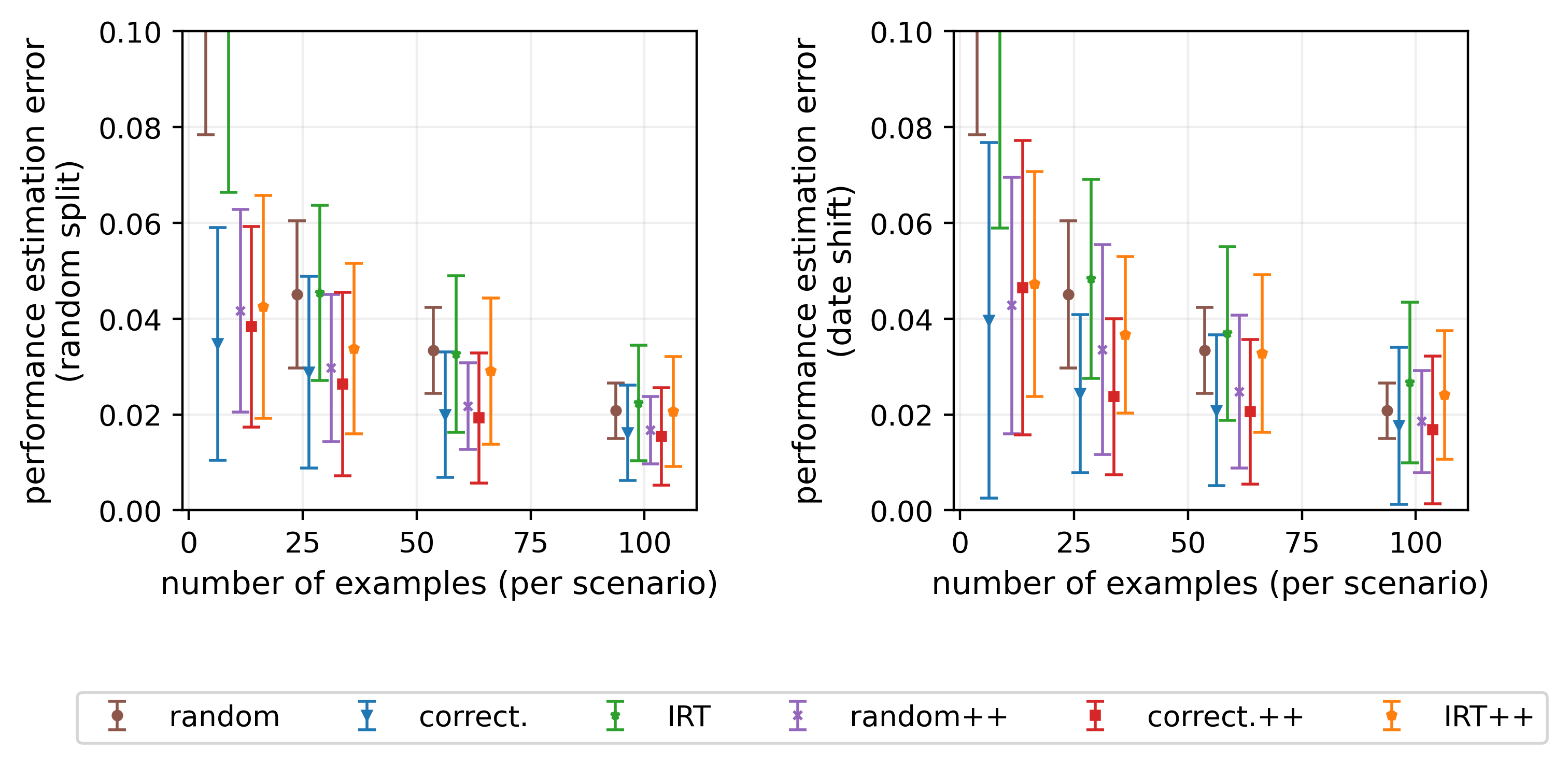}
\caption{NarrativeQA}
\end{figure}

\begin{figure}[H]
\centering
\includegraphics[width=.75\textwidth]{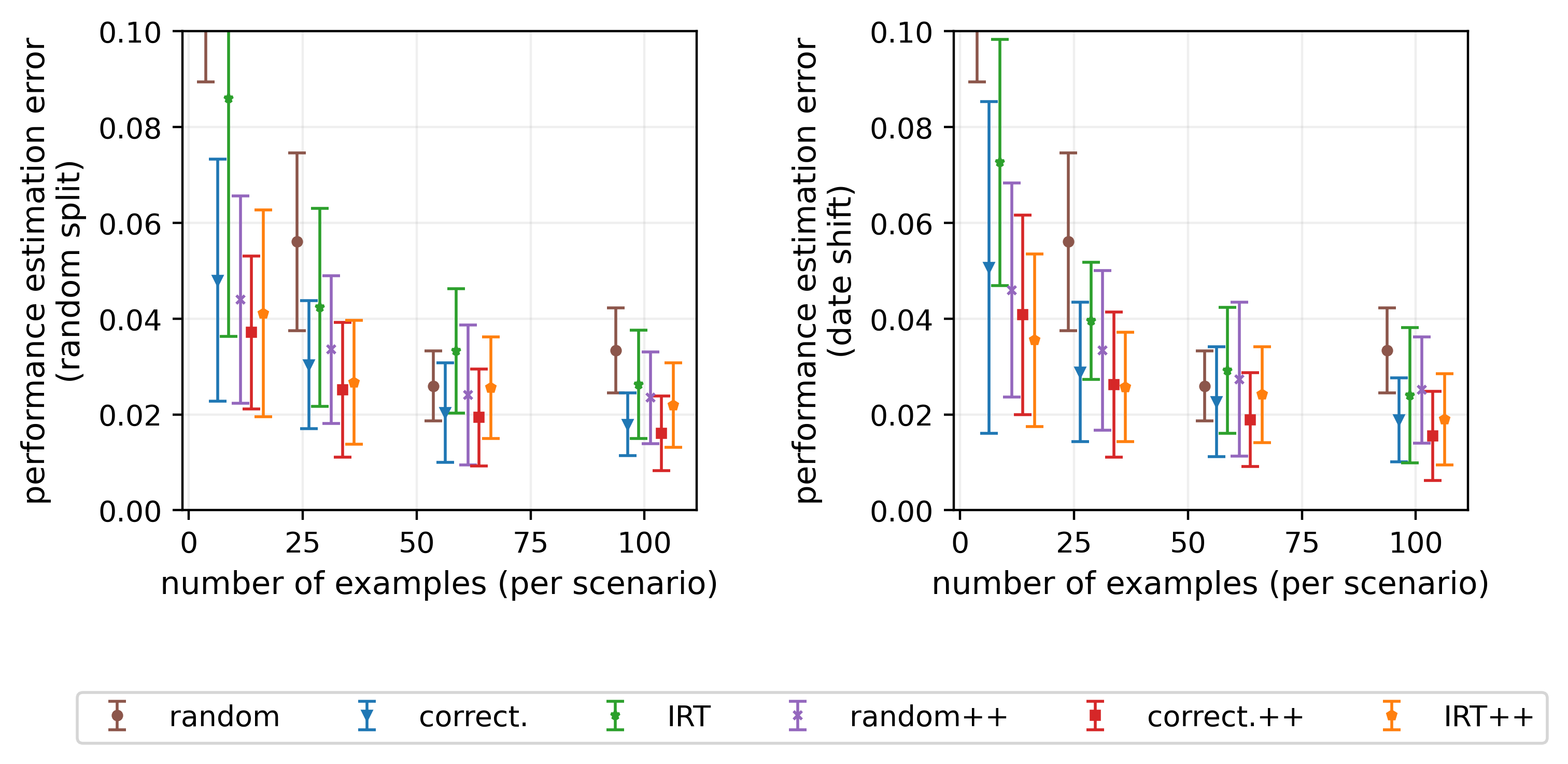}
\caption{NaturalQA (closed book)}
\end{figure}

\begin{figure}[H]
\centering
\includegraphics[width=.75\textwidth]{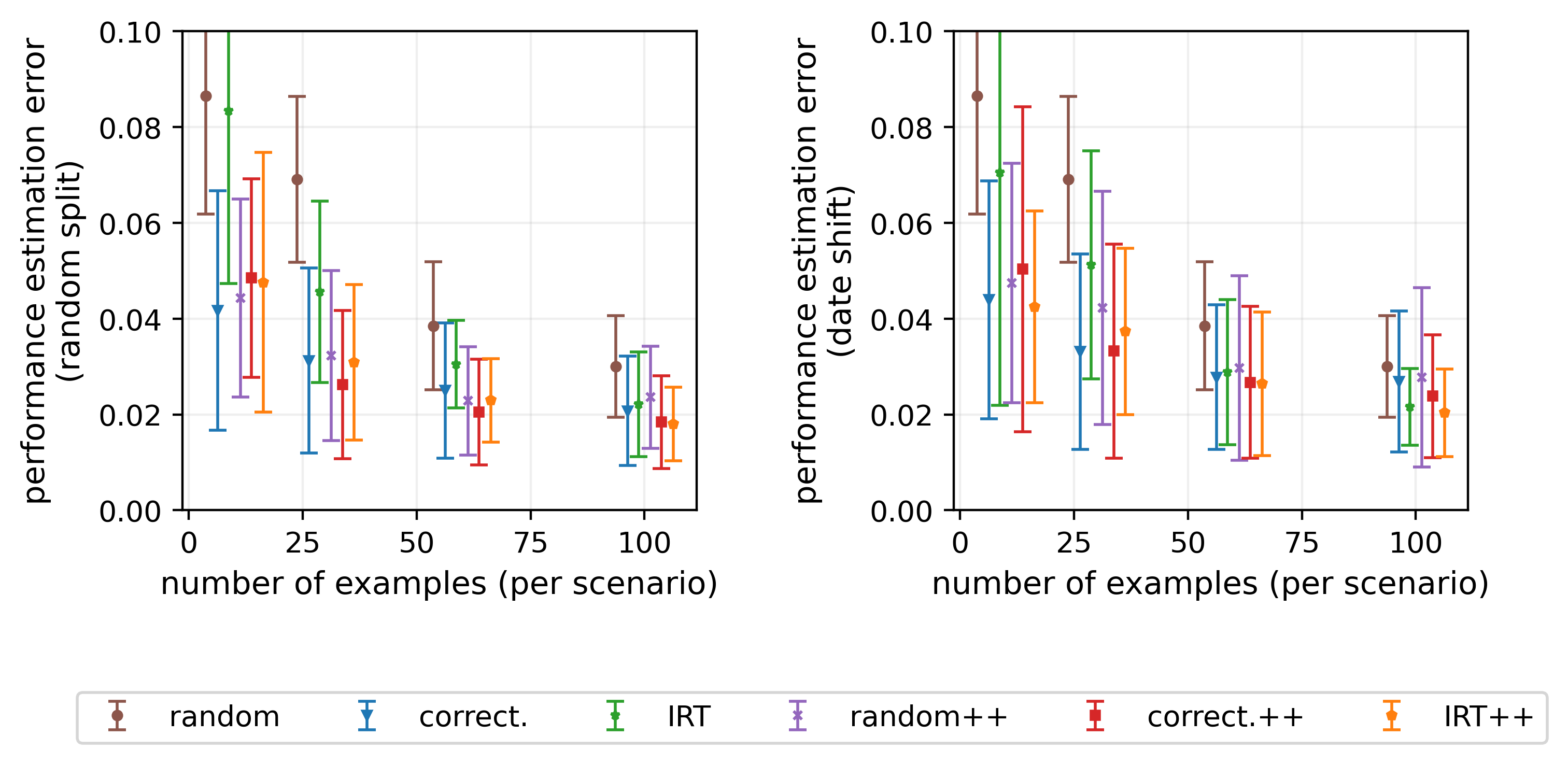}
\caption{NaturalQA (open book)}
\end{figure}

\begin{figure}[H]
\centering
\includegraphics[width=.75\textwidth]{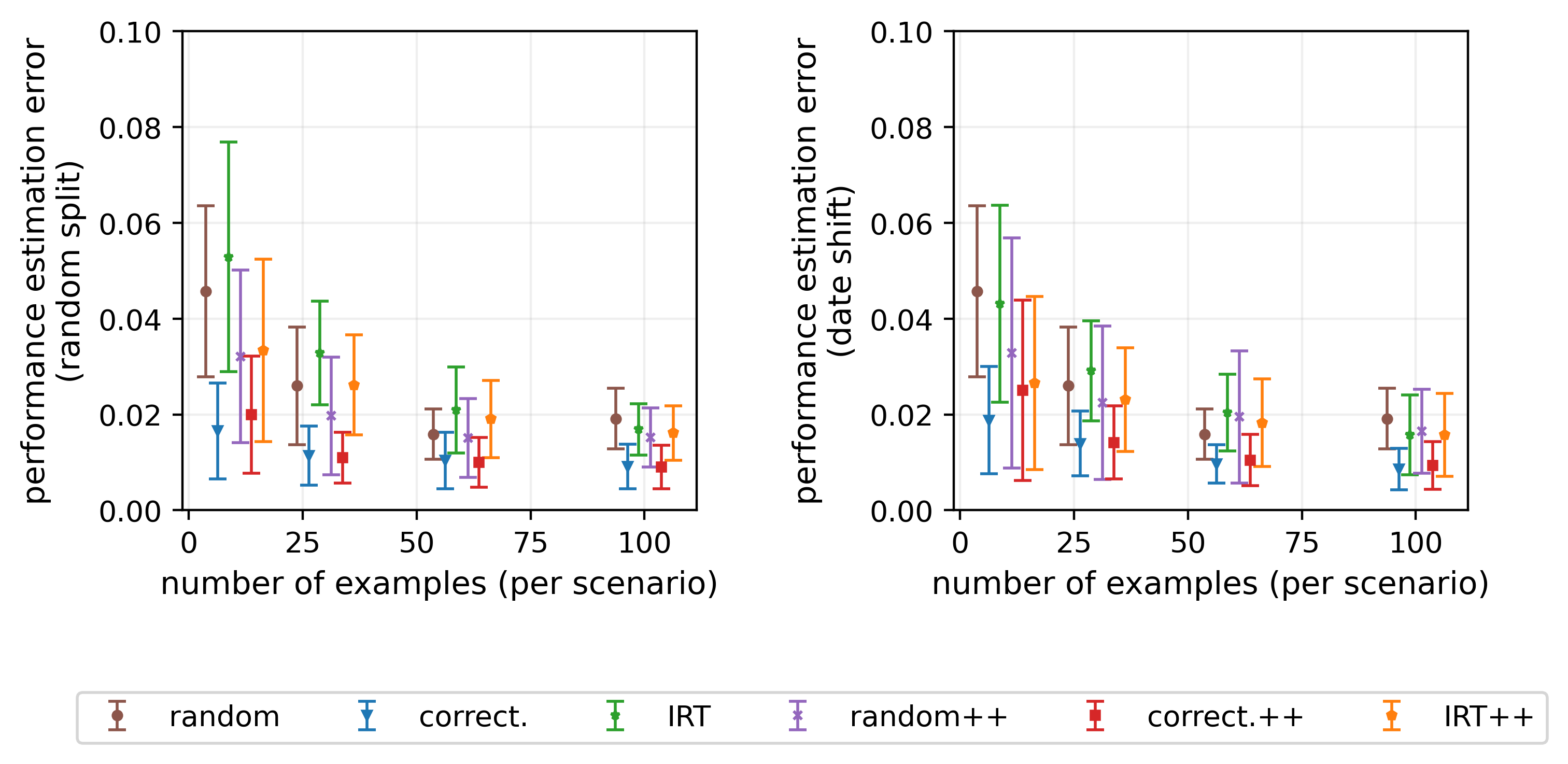}
\caption{WMT14}
\end{figure}

\end{document}